\documentclass[sigconf]{acmart}
\AtBeginDocument{%
  }

\acmConference[KDD '26]{Proceedings of the 32nd ACM SIGKDD Conference on Knowledge Discovery and Data Mining V.2}{August 09--13, 2026}{Jeju Island, Republic of Korea}
\acmBooktitle{Proceedings of the 32nd ACM SIGKDD Conference on Knowledge Discovery and Data Mining V.2 (KDD '26), August 09--13, 2026, Jeju Island, Republic of Korea}
\usepackage{multirow}
\usepackage{booktabs}
\usepackage{bm}
\usepackage{adjustbox}
\usepackage{tabularx}
\usepackage{booktabs}
\usepackage{algorithm}
\usepackage{algpseudocode}
\usepackage{balance}

\usepackage[table]{xcolor}
\definecolor{color01}{RGB}{211, 204, 241}
\definecolor{color02}{RGB}{227, 242, 253}

\usepackage{diagbox}

\usepackage{tikz}

\newcommand{\databar}[1]{%
  \begin{tikzpicture}[baseline=-0.5ex]
    \fill[blue!30] (0,0) rectangle (#1/100*0.5,0.2);
    \draw[black,thin] (0,0) rectangle (0.5,0.2);
    \node[anchor=west,inner sep=1pt] at (0.55,0.1) {\small #1};
  \end{tikzpicture}%
}

\begin{document}


\title{AbstainGNN: Teaching Graph Neural Networks to Abstain for Graph Classification}

\author{Xixun Lin}
\orcid{0009-0004-6645-0597}
\affiliation{%
  \institution{Institute of Information Engineering, Chinese Academy of Sciences}
  \city{Beijing}
  \country{China}
}
\email{agblinxx@gmail.com}

\author{Zhiheng Zhou}
\orcid{0000-0002-6743-7835}
\affiliation{%
  \institution{School of Mathematics and Statistics, Shandong University}
  \city{Weihai}
  \country{China}
}
\email{zhouzhiheng@amss.ac.cn}
\authornote{ denotes corresponding author.}

\author{Zhengyin Zhang}
\orcid{0009-0001-1640-7150}
\affiliation{%
  \institution{China University of Mining and Technology-Beijing}
  \city{Beijing}
  \country{China}
}
\email{2210730330@student.cumtb.edu.cn}

\author{Yancheng Chen}
\orcid{0009-0003-7935-9904}
\affiliation{%
  \institution{Academy of Mathematics and Systems Science, Chinese Academy of Sciences}
  \city{Beijing}
  \country{China}
}
\email{chenyancheng22@mails.ucas.ac.cn}

\author{Shuai Zhang}
\orcid{0009-0001-3152-3651}
\affiliation{%
  \institution{Academy of Mathematics and Systems Science, Chinese Academy of Sciences}
  \city{Beijing}
  \country{China}
}
\email{zhangshuai2021@amss.ac.cn}

\author{Ge Zhang}
\orcid{0000-0001-6009-780X}
\affiliation{%
  \institution{School of Information and Intelligent Science, Donghua University}
  \city{Shanghai}
  \country{China}
}
\email{gezhang@dhu.edu.cn}

\author{Shichao Zhu}
\orcid{0000-0002-1755-4267}
\affiliation{%
  \institution{TikTok, ByteDance Inc.}
  \city{San Jose}
  \country{United States}
}
\email{shichao.szhu@gmail.com}

\author{Lixin Zou}
\orcid{0000-0001-6755-871X}
\affiliation{%
  \institution{School of Cyber Science and Engineering, Wuhan University}
  \city{Wuhan}
  \country{China}}
\email{zoulixin@whu.edu.cn}

\author{Chuan Zhou}
\orcid{0000-0001-9958-8673}
\affiliation{%
  \institution{Academy of Mathematics and Systems Science, Chinese Academy of Sciences}
  \city{Beijing}
  \country{China}
}
\email{zhouchuan@amss.ac.cn}

\author{Peng Zhang}
\orcid{0000-0001-7973-2746}
\affiliation{%
  \institution{Cyberspace Institute of Advanced Technology, Guangzhou University}
  \city{Guangzhou}
  \country{China}
}
\email{p.zhang@gzhu.edu.cn}

\author{Shirui Pan}
\orcid{0000-0003-0794-527X}
\affiliation{%
  \institution{Griffith University}
  \city{Brisbane}
  \country{Australia}
}
\email{s.pan@griffith.edu.au}

\author{Yanan Cao}
\orcid{0000-0003-3534-1094}
\affiliation{%
  \institution{Institute of Information Engineering, Chinese Academy of Sciences}
  \city{Beijing}
  \country{China}
}
\email{caoyanan@iie.ac.cn}
\authornotemark[1]

\renewcommand{\shortauthors}{Xixun Lin et al.}

\begin{abstract}

Graph classification is a core task in graph data mining with widespread real-world applications. Recent advances in graph neural networks (GNNs) have led to substantial performance improvements for graph classification. However, existing GNNs are typically forced to make predictions even under high uncertainty or unknown conditions, resulting in unreliable decisions that can severely impact downstream tasks, particularly in safety-critical scenarios. To address this critical limitation, we propose AbstainGNN, a novel and theory-driven framework for graph classification with abstention, which enables GNNs to reject uncertain predictions instead of producing incorrect decisions. Specifically, AbstainGNN explicitly models both the predictive function and the abstention function, allowing for effective utilization of graph structural information. Moreover, unlike existing heuristic abstention methods, we theoretically characterize the trade-off between classification errors and rejection costs from a PAC-Bayesian generalization perspective, and derive a unified learning objective for model optimization. Guided by this theoretical insight, we further develop an efficient two-stage training strategy consisting of predictive function warm-start and abstention function calibration. Extensive experiments on five benchmark datasets show that AbstainGNN outperforms existing abstention methods, achieving superior classification performance under the same rejection rates.

\end{abstract}

\begin{CCSXML}
<ccs2012>
   <concept>
       <concept_id>10010147.10010257</concept_id>
       <concept_desc>Computing methodologies~Machine learning</concept_desc>
       <concept_significance>500</concept_significance>
       </concept>
   <concept>
       <concept_id>10002951.10003227.10003351</concept_id>
       <concept_desc>Information systems~Data mining</concept_desc>
       <concept_significance>500</concept_significance>
       </concept>
 </ccs2012>
\end{CCSXML}

\ccsdesc[500]{Computing methodologies~Machine learning}
\ccsdesc[500]{Information systems~Data mining}


\keywords{Graph Classification, Learning with Abstention}


\maketitle

\newcommand\kddavailabilityurl{https://doi.org/10.5281/zenodo.20422037}
\ifdefempty{\kddavailabilityurl}{}{
\begingroup\small\noindent\raggedright\textbf{Resource Availability:}\\
The source code of this paper has been made publicly available at \url{\kddavailabilityurl}.
\endgroup
}

\section{Introduction}
Graph classification is a fundamental task in graph data mining, which aims to assign a semantic label to an entire graph by jointly modeling graph topologies, node attributes, and edge relations~\cite{xia2021graph}. Graph classification plays an important role in a wide range of real-world applications, including molecular property prediction~\cite{wu2020comprehensive,lin2022structure}, material  structure analysis~\cite{xie2018crystal,park2020developing}, and social network analysis~\cite{ying2018hierarchical,guo2025fgdgnn}.  
Recent advances in graph neural networks (GNNs) have substantially boosted graph classification performance. These improvements mainly stem from message passing and graph pooling mechanisms to obtain expressive graph-level representations~\cite{mesquita2020rethinking,ying2024boosting,buterez2025end}. 
\par
Despite notable progress in graph classification, most existing GNNs still struggle to make reliable decisions. Graph samples are often forced into specific classes even when predictions are made with low confidence, ambiguous structural evidence, or weak class separation~\cite{wu2022trustworthy,yang2024balanced,xie2024exploring}. Such unreliable predictions can propagate severe errors to downstream tasks, compromising the trustworthiness of GNN-based decision systems~\cite{zhang2024trustworthy,lin2025conformal,lin2025generative}. For example, in molecular property prediction, GNNs may be compelled to classify a compound as toxic or non-toxic even when the underlying evidence is highly uncertain. Such decisions can mislead early-stage drug screening, either discarding valuable compounds or advancing risky ones into expensive evaluations. Consequently, the lack of reliable decision mechanisms significantly restricts the deployment of GNNs in high-stakes and safety-critical applications.
\par 
In fact, \textbf{Learning with Abstention}, also known as classification with rejection, provides an ideal paradigm to address the above significant limitation, enabling the model to abstain from making predictions in uncertain cases~\cite{chow1970,grandvalet2008support,cortes2016learning,ni2019calibration,charoenphakdee2021classification}. The core mechanism is to identify samples that are likely to be misclassified and defer them to human experts or downstream procedures for later decision-making. This paradigm is particularly appealing in safety-critical scenarios, where the cost of an incorrect decision can far outweigh the cost of abstention. Nevertheless, existing methods face the following two major issues in the context of graph classification: 
\begin{itemize}
    \item The majority of abstention methods have been developed for computer vision, leaving graph-structured data largely underexplored and inadequately modeled. Consequently, they fail to capture the rich topological and relational information inherent in graphs.
    \item Most existing methods rely heavily on heuristic model designs, offering limited theoretical insights into abstention mechanisms specifically for graphs. This lack of theoretical grounding in graph domains hinders the development of more principled and effective abstention mechanisms for graph classification. 
\end{itemize}
To address these issues, we propose \textbf{AbstainGNN}, a novel abstention framework for graph classification. Specifically, AbstainGNN explicitly designs both the predictive function and the abstention function for graph-structured data, enabling more effective modeling of graph structural information. Moreover, AbstainGNN is a theory-driven learning framework rather than a heuristic model design. From the perspective of generalization bounds~\cite{neyshabur2017pac}, we theoretically characterize how AbstainGNN achieves a principled trade-off between classification errors and rejection costs. This analysis leads to a unified learning objective and an efficient two-stage learning stages, consisting of \emph{Predictive Function Warm-start} and \emph{Abstention Function Calibration}. In the first stage, the warm-start strategy drives the predictive function into a small-gradient region, facilitating the more stable model optimization. In the second stage, the learned predictive function is used to calibrate the abstention function, ensuring that confidence scores are well aligned with model predictions. Extensive experiments and analyses on multiple benchmark datasets demonstrate the significant advantages of our model over existing abstention methods. In general, the main contributions of our work are summarized as follows, 
\begin{itemize}
    \item {\bf Problem Statement.} We are the first to investigate the problem of graph classification with abstention, which equips GNNs with the ability to abstain from making predictions on uncertain or unknown graph samples, significantly improving their reliability and applicability in safety-critical domains.
    
    \item {\bf Theoretical Analysis.} We derive PAC-Bayesian generalization bounds for AbstainGNN, revealing that minimizing the intra-class variance of graph-level representations is crucial for graph classification with abstention. Building on this insight, we propose a unified learning objective for model optimization.
    
    \item {\bf Model Implementation.} Guided by our theoretical analysis, we develop an efficient model implementation of AbstainGNN. In particular, we introduce a simple yet effective global class-cluster adjustment strategy, which alleviates the estimation bias caused by batch-wise updates.

    \item {\bf Experimental Verification.} We conduct comprehensive experiments on five real-world datasets, demonstrating that our method achieves significantly superior classification performance under the same rejection rates. The most notable improvement is that our model yields an average relative risk reduction of 16.8\% on MUTAG\footnote{Our source code is available at 
    https://github.com/ZZY565/AbstainGNN.}.  
\end{itemize}

\section{Related works}

\subsection{Graph Classification}
Graph classification has been extensively studied in the data mining community. Early methods primarily rely on graph kernels, which compute the similarity between graphs based on substructures such as walks, paths, or subtrees~\cite{kashima2003marginalized,borgwardt2005propagated,kriege2020graphkernels}. While these methods are theoretically well-founded, they often require careful feature engineering and struggle to scale to large graphs. With the emergence of GNNs, graph classification has increasingly shifted toward end-to-end learning manners~\cite{xu2018powerful,errica2019fair,li2024graph}. In most GNN-based methods, node representations are iteratively updated by aggregating information from their neighbors, and are subsequently pooled to produce graph-level representations. Such approaches have achieved state-of-the-art performance across a variety of applications, including drug discovery~\cite{gilmer2017neural,wu2018moleculenet} and protein prediction~\cite{dobson2003protein,shervashidze2011weisfeiler}. Despite these advances, existing graph classification methods are typically designed to produce predictions for all input graphs, forcing the model to make decisions even on low-confidence graph samples. This design drawback introduces potential risks when deploying GNNs in high-risk and safety-critical domains. 

\subsection{Learning with Abstention}
Learning with abstention, also known as classification with rejection, allows the classifier to refrain from making predictions on uncertain inputs, thereby reducing the risk of misclassification in critical applications~\cite{chow2003optimum,huang2020self,zhu2022efficient}. Existing approaches can be classified broadly into two categories: cost-based and coverage-based methods. Cost-based approaches aim to prevent misclassification by providing an option not to make a prediction at the expense of the pre-defined rejection cost~\cite{ramaswamy2018consistent,ni2019calibration,charoenphakdee2021classification}. Coverage-based approaches define coverage as the proportion of samples on which the model chooses to make predictions rather than abstain. Given a coverage, the model seeks to select a subset of examples that maximizes predictive performance while rejecting the remaining uncertain samples~\cite{geifman2017selective,gayen2025predict,geifman2019selectivenet}. \citet{franc2019discriminative} theoretically analyze the equivalence between these two categories. For graph domains, some works~\cite{kuchipudinode,gayen2025predict} have followed the classic abstention mechanism to study rejection behaviors in node-level and dynamic graph scenarios. Overall, learning with abstention has primarily focused on computer vision, and how to design algorithmic and theoretical learning frameworks, especially for graph classification, remains a highly important yet largely underexplored problem.

\section{Background}
\label{sec:Background}
In this section, we describe the basic framework of GNNs for graph classification and discuss their inherent limitations under this framework. Let $G=(\mathcal{V},\mathcal{E})$ denote an input graph, where $\mathcal{V}$ and $\mathcal{E}$ represent the sets of nodes and edges, respectively. The node attribute matrix is denoted by $\mathbf{X} \in \mathbb{R}^{|\mathcal{V}| \times c}$, where $|\mathcal{V}|$ is the numbers of nodes and $c$ is the dimensionality of node attributes. The adjacency matrix is given by $\mathbf{A}$, where each element $\mathbf{A}_{ij} \in \{0,1\}$ indicates whether an edge exists between nodes $i$ and $j$. Graph classification aims to learn a predictive function $f$ that maps the input graph to a probability vector $f(\mathbf{A}, \mathbf{X}) \in [0,1]^{|\mathcal{Y}|}$, from which a predicted label is obtained. $\mathcal{Y}$ is the set of class labels. As discussed above, GNNs have become the dominant frameworks for graph classification, which learn graph-level representations in an end-to-end manner~\cite{wu2021comprehensive}. 
\par
Most GNNs can be abstracted as a composition of message-passing layers, a permutation-invariant readout operator, and a final classification layer. Formally, the entire prediction process can be written as this composite map: 
\begin{align}
\label{GNN_composite_map}
    f(\mathbf{A},\mathbf{X})
    =(C \circ R 
    \circ U^{(L)} \circ \cdots \circ U^{(1)})(\mathbf{A}, \mathbf{X}).
\end{align}
Here, each message-passing layer $U^{(l)}$ follows the calculation mechanism: At the $l$-th layer ($l=1,\dots,L$), each node aggregates messages from its neighbors 
and updates its representation as follows, 

\begin{align}
\mathbf{m}_i^{(l)} &= \sum_{j \in \mathcal{N}(i)} 
    M^{(l)}\!\left(\mathbf{h}_i^{(l-1)},\, \mathbf{h}_j^{(l-1)},\, \mathbf{e}_{ij}\right), \\
\mathbf{h}_i^{(l)} &= U^{(l)}\!\left(\mathbf{h}_i^{(l-1)},\, \mathbf{m}_i^{(l)}\right),
\end{align}
where $\mathcal{N}(i)$ denotes the neighbors of the node $i$, $\mathbf{h}_i^{(l-1)}$ and $\mathbf{h}_j^{(l-1)}$ are the node representations of $i$ and $j$ in the preceding layer, and $\mathbf{e}_{ij}$ is the (optional) edge attribute. $M^{(l)}(\cdot)$ is the message function for aggregating representations and $U^{(l)}(\cdot)$ is the update function to generate the final representation. After $L$ layers of propagation, a permutation-invariant readout operator is performed to obtain the graph-level representation:
\begin{align}
\label{graph_representations}
    \mathbf{h}_G = R\!\left( \{ \mathbf{h}_i^{(L)} | i \in  \mathcal{V} \} \right), 
\end{align}
where $R(\cdot)$ can be the mean/sum pooling or the more sophisticated operator. Finally, a classification layer $\mathrm{C}$ maps  $\mathbf{h}$ to the predicted class distribution $\hat{\mathbf{y}}$: 
\begin{align}
    \hat{\mathbf{y}} = C(\mathbf{h}_G)&=\mathrm{softmax}(\mathbf{Wh_G} + \mathbf{b}),
\end{align}
where $\mathbf{W}$ and $\mathbf{b}$ denote the weight matrix and the bias vector, and we denote the predicted class $\hat{y}$ as 
\begin{equation}
    \hat{y}= \max_{y\in\mathcal{Y}} \hat{\mathbf{y}}=\max_{y\in\mathcal{Y}}f_y(\mathbf{A},\mathbf{X}).
\end{equation}


\paragraph{Current Limitation} Despite the strong performance of existing GNNs, they typically lack the ability to abstain from making predictions on unknown or out-of-distribution samples. This limitation significantly restricts their applicability, especially in risk-sensitive domains. For example, in financial risk assessment or medical diagnosis, a misclassification can lead to substantial losses or even endanger human safety, making it essential for the model to recognize when it is uncertain and avoid forced predictions. Therefore, a more desirable learning paradigm is to incorporate abstention capability into the GNN learning process, enabling GNNs to make confident predictions on reliable samples while abstaining from uncertain ones.


\section{Graph Classification with Abstention}
\label{sec:Graph_Classification_with_Abstention}
To overcome the above limitation, we introduce AbstainGNN, a novel GNN paradigm for graph classification with abstention. Let $\mathcal{G}$ denote a space of graphs. Building upon the predictive function $f: \mathcal{G}\to [0,1]^{|\mathcal{Y}|}$ ($\mathcal{Y}$ is the label set), AbstainGNN introduces an additional graph-level abstention function $g:\mathcal{G}\to[0,1]$, which determines whether a prediction should be made. Concretely, given an input graph $G=(\mathbf{A},\mathbf{X})\in\mathcal{G}$, AbstainGNN outputs
\begin{equation}
(f,g)(\mathbf{A},\mathbf{X})\triangleq
\begin{cases}
f(\mathbf{A},\mathbf{X}), & \mbox{if }g(\mathbf{A}, \mathbf{X})\geq \epsilon,\\
\text{abstain}, &\mbox{if } g(\mathbf{A}, \mathbf{X})< \epsilon, 
\end{cases}
\end{equation}
where $\epsilon$ is a predefined threshold. That is,  AbstainGNN abstains on $G$ if and only if the confidence score $g(\mathbf{A}, \mathbf{X})$ falls below the threshold $\epsilon$, i.e., $g(\mathbf{A},\mathbf{X}) < \epsilon$.
\par
Although abstention can improve the model’s accuracy, excessive abstention, where the model refrains from making predictions, renders it meaningless. Therefore, an ideal classifier with reject option should achieve a good trade-off between \emph{coverage} and \emph{risk}~\cite{chow1970reject,grandvalet2008support,cortes2016boosting}. Given a dataset $\mathcal{S}=\{(G_i,y_i)\}_{i=1}^n \subset \mathcal{G}\times \mathcal{Y}$,
the empirical coverage is defined as 
\begin{equation}
\Phi(g\mid \mathcal{S})\triangleq \frac{1}{n}\sum_{i=1}^n \mathbf{1}(g(\mathbf{A}_i, \mathbf{X}_i)\geq h),
\end{equation}
which measures the fraction of graphs on which the model makes predictions. Here, $\mathbf{1}(\cdot)$ is the indicator function. The empirical risk is defined as the average loss conditioned on making predictions:
\begin{equation}
\Psi(f,g\mid \mathcal{S})\triangleq
\frac{\frac{1}{n}\sum_{i=1}^n \mathbf{1}(g(\mathbf{A}_i, \mathbf{X}_i)\geq h) \ell\!\left(f(\mathbf{A}_i, \mathbf{X}_i),y_i\right)\,}
{\Phi(g\mid  \mathcal{S})} ,
\end{equation}
where $\ell(\cdot,\cdot)$ denotes a classification loss function. Intuitively, coverage measures how often the model makes predictions, while risk quantifies the model's error on the accepted samples. Under this formulation, learning with abstention~\cite{geifman2019selectivenet} aims to minimize the risk while ensuring a target coverage level $c_{\text{t}}$:
\begin{equation}
\label{learning obj}
\min_{(f,g)}\; \Psi(f,g\mid \mathcal{S})
\quad \text{s.t.} \quad
\Phi(g \mid  \mathcal{S}) \ge c_{\text{t}}.
\end{equation}

\subsection{Generalization Analysis of AbstainGNN}
For further analysis, we apply the Lagrangian multiplier method to Eq.\eqref{learning obj} and derive the following per-sample reject-cost loss:
\begin{equation}
\label{eq:reject_cost_loss}
L\!\left(f,g; G, y\right)
= \mathbf{1}\!\left(g(\mathbf{A},\mathbf{X})\ge \epsilon \right) \ell\!\left(f(\mathbf{A},\mathbf{X}),y\right)\,
+\lambda\,\mathbf{1}\!\left(g(\mathbf{A},\mathbf{X})< \epsilon\right),
\end{equation}
where $\lambda>0$ is the Lagrange multiplier. In Eq.(\ref{eq:reject_cost_loss}), the first term penalizes classification errors on graphs that are \emph{accepted} (i.e., predicted), whereas the second term assigns a fixed penalty to \emph{rejected} graphs. By adjusting $\lambda$, we control the trade-off between minimizing the risk on accepted samples
and maintaining a sufficiently high coverage to satisfy the constraint. However, the used hard indicators ($\mathbf{1}(g(\mathbf{A},\mathbf{X})\ge \epsilon)$ and $\mathbf{1}(g(\mathbf{A},\mathbf{X}) < \epsilon)$) make Eq.\eqref{eq:reject_cost_loss} non-smooth and almost everywhere non-differentiable, particularly at the boundary $g(\mathbf{A},\mathbf{X})=\epsilon$. 


\par
To better analyze theoretical properties of AbstainGNN, we replace Eq.\eqref{eq:reject_cost_loss} by a differentiable Max–Hinge loss~\cite{cortes2016learning}. Let $\gamma(\mathbf{A},\mathbf{X})=f_y(\mathbf{A},\mathbf{X})-\mbox{max}_{j\neq y}f_j(\mathbf{A},\mathbf{X})$  denote the classification margin of $f$, the margin loss is given as 
\begin{equation}
\begin{split}
L_{MH}^{\rho,\rho'}(f,g;G,y)
= \max\Big\{
\max\!\bigl[\lambda(1-\beta\frac{g(\mathbf{A},\mathbf{X})}{\rho'}),0\bigr], \\
\max\!\bigl[1+\frac{\alpha}{2}
(\frac{g(\mathbf{A}, \mathbf{X})}{\rho'}-\frac{\gamma(\mathbf{A}, \mathbf{X})}{\rho}),0\bigr]
\Big\},
\end{split}
\label{eq:maxhinge}
\end{equation}  
where $\rho,\rho'>0$ are two parameters associated with the minimum margins of $f$ and $g$ respectively, and $\alpha,\beta>0$ are two balanced factors. 
\par 
Let \(f = C \circ R \circ U\) be an arbitrary GNN described in Section~\ref{sec:Background}, where \(U\) denotes all message-passing layers, \(R\) is the pooling layer, and \(C\in\mathbb{R}^{d\times |\mathcal{Y}|}\) is the classification layer ($d$ is the dimensionality of learned graph-level representations). Let $g$ is our proposed graph-level abstention function and \(\mathcal{S}=\{(G_i,y_i)\}_{i=1}^n\) is the training dataset consisting of \(n\) i.i.d.\ samples drawn from an unknown data distribution \(\mathcal{D}\), then we have the following generalization bound for AbstainGNN: 
\begin{theorem}
\label{thm:abstention_gen}
For any \(\rho,\rho',\alpha,\beta,\lambda>0\), \(\delta\in(0,1)\), and \(u=\max\{d,|\mathcal{Y}|\}\), with probability at least \(1-\delta\) over \(\mathcal{S}\), it holds that
\begin{equation}
\begin{aligned}
\mathbb{E}&_{(G,y)\sim \mathcal{D}}\!\left[L(f,g;G,y)\right]
\leq 
\mathbb{E}_{(G,y)\sim \mathcal{S}}\!\left[L_{MH}^{\rho,\rho'}(f,g;G,y)\right]\\
&+4\sqrt{
\frac{
u\ln(4u)\left(
\mathrm{Var}_{\mathrm{icv}}(\mathcal{S},\mathcal{Y})
+\frac{\tilde{\rho}^{\,2}}{\|C\|_2^{\,2}}
\right)
+\tilde{\rho}^{\,2}\ln\!\left(\frac{6n}{\delta}\right)
}{
(n-1)\tilde{\rho}^{\,2}
}
}\,,
\end{aligned}
\end{equation}
where $\mathrm{Var}_{\mathrm{i c v}}(\mathcal{S},\mathcal{Y}) = \frac{1}{|\mathcal{Y}|}\sum_{y\in \mathcal{Y}}\frac{1}{|I_y|}\sum_{G\in I_y}\bigl\|\mathbf{h}_G-\bm{\mu}_y\bigr\|_2^2$ represents the sum of the intra-class variance of graph-level representations, $I_y$ is a subset of $\mathcal{S}$ that graph samples belonging to the class $y$, and \(\bm{\mu}_y = \frac{1}{|I_y|} \sum_{G \in I_y} \mathbf{h}_G\) is the corresponding class cluster. \(\|C\|_2\) denotes the \(\ell_2\)-norm of model parameters of the classification layer \(C\) and
\(
\tilde{\rho}=\min\left\{\frac{\rho}{4\alpha},\ \frac{\rho'}{4\beta\lambda+2\alpha}\right\}.
\)
\end{theorem}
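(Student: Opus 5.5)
The proof follows the PAC-Bayesian margin template of \citet{neyshabur2017pac}. The new feature is that the randomized posterior is placed only on the classification layer $C$ (and the abstention head), with its variance tied to the intra-class geometry of $\mathbf{h}_G$. This is where $\mathrm{Var}_{\mathrm{icv}}$ and $\|C\|_2$ enter the bound.

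\emph{Step 1: surrogate domination.} First check pointwise that $L(f,g;G,y)\le L_{MH}^{\rho,\rho'}(f,g;G,y)$ whenever $\ell$ is the $0$-$1$ loss (or any loss bounded by $1$), taking the threshold $\epsilon$ as $0$ after recentering $g$. There are two cases.
\begin{itemize}
\item If $g<0$, the rejection cost $\lambda$ is dominated by $\lambda(1-\beta g/\rho')\ge\lambda$.
\item If $g\ge 0$ and $\gamma\le 0$, the second hinge is $\ge 1$.
\end{itemize}
More usefully, define a \emph{shrunk-margin} version, namely $L_{MH}$ with $(\rho,\rho')$ replaced by $(\rho/2,\rho'/2)$ as appropriate. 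We need that if the outputs $(\gamma,g)$ are perturbed by at most $\tilde\rho$ in each coordinate, then $L$ evaluated at the perturbed predictor is still upper bounded by $L_{MH}^{\rho,\rho'}$ at the unperturbed one, up to a constant absorbed in the final factor $4$. The choice $\tilde\rho=\min\{\rho/(4\alpha),\rho'/(4\beta\lambda+2\alpha)\}$ is exactly what makes the slopes of both hinge arms, $\alpha/(2\rho)$, $\alpha/(2\rho')$, and $\lambda\beta/\rho'$, times a perturbation of size $\tilde\rho$, at most a constant. I will verify this by a direct Lipschitz computation on each arm of the max.

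\emph{Step 2: PAC-Bayes with a Gaussian posterior on $C$.} Fix the message-passing and readout layers (or treat them as given by a data-independent prior selection via a union bound over a grid). Take prior $P=\mathcal N(0,\sigma^2 I)$ and posterior $Q=\mathcal N(C,\sigma^2 I)$ on the $d\times|\mathcal Y|$ matrix $C$, so that $\mathrm{KL}(Q\|P)=\|C\|_2^2/(2\sigma^2)$. Apply the standard PAC-Bayes bound, in the form with $\ln(6n/\delta)$ coming from a union bound over a grid of $\|C\|_2$ values, to the randomized loss.

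The key is to bound the output perturbation $\|\Delta C\,\mathbf{h}_G\|$. I will split $\mathbf{h}_G=\bm\mu_y+(\mathbf{h}_G-\bm\mu_y)$ and use the Gaussian matrix concentration $\|\Delta C\|_2\le\sigma\sqrt{2u\ln(4u)}$, with probability $\ge1/2$, for $u=\max\{d,|\mathcal Y|\}$.
\begin{itemize}
\item The centroid part can be handled deterministically by absorbing it into the prior mean. This is class-conditional and data dependent, but it is handled by the usual trick of charging it to the KL term, which yields the $\tilde\rho^2/\|C\|_2^2$-type additive piece.
\item The fluctuation part is controlled on average by $\mathrm{Var}_{\mathrm{icv}}$.
\end{itemize}
Choosing $\sigma^2\approx\tilde\rho^2/(u\ln(4u)(\mathrm{Var}_{\mathrm{icv}}+\tilde\rho^2/\|C\|_2^2))$ makes the perturbation at most $\tilde\rho$ with probability $\ge 1/2$. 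This lets us derandomize as in \citet{neyshabur2017pac}, which costs the factor in front of the square root. The KL term then becomes
\[
\frac{u\ln(4u)\bigl(\mathrm{Var}_{\mathrm{icv}}+\tilde\rho^2/\|C\|_2^2\bigr)}{\tilde\rho^2},
\]
up to constants, which gives the displayed numerator after adding $\ln(6n/\delta)$ and dividing by $n-1$.

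\emph{Main obstacle.} Two points are delicate, and I expect the second to be the real difficulty.
\begin{itemize}
\item \emph{Legitimizing an empirical, data-dependent quantity inside the bound.} Both $\mathrm{Var}_{\mathrm{icv}}$ and the centroids depend on $\mathcal S$. I would handle this with a union bound over a geometric grid of candidate values for $\mathrm{Var}_{\mathrm{icv}}$ and $\|C\|_2$, which produces the $\ln(6n/\delta)$ term. For the centroids I would use a per-class prior mean built from a held-out sample, which explains the $n-1$.
\item \emph{Converting the average control given by $\mathrm{Var}_{\mathrm{icv}}$ into a high-probability perturbation bound on the training set.} This requires applying Markov's inequality over samples together with the Gaussian tail, and splitting the failure probability as $1/2$ for the perturbation and $\delta/\ldots$ for PAC-Bayes.
\end{itemize}
I would first attempt the latter step by bounding the expected randomized margin loss directly, rather than by a uniform perturbation bound.
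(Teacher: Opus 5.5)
Your skeleton is the paper's: a Neyshabur-type derandomized PAC-Bayes lemma at perturbation radius $\tilde\rho$, Gaussian noise on the head $C$ only, the matrix tail bound at $t=\sigma\sqrt{2u\ln(4u)}$, a split of $\mathbf{h}_G$ into class centroid plus deviation, and a calibration of $\sigma$. However, the step that produces the bracket $\mathrm{Var}_{\mathrm{icv}}(\mathcal{S},\mathcal{Y})+\tilde\rho^{2}/\|C\|_2^{2}$ is missing. The centroid part of the output perturbation is $\Delta C\,\bm{\mu}_y$. Its size scales with $\sigma\|\bm{\mu}_y\|_2$ and is not controlled by $\mathrm{Var}_{\mathrm{icv}}$. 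Moving the prior mean changes only the KL term and leaves this perturbation untouched. A label-independent shift (e.g.\ the global mean) could be folded into an unperturbed bias, which is essentially the paper's centring assumption. The class-dependent part cannot, since $y$ is unknown at prediction time. So nothing in your plan yields a summand of the form $\tilde\rho^{2}/\|C\|_2^{2}$.

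The paper gets this term from explicit extra hypotheses. It bounds $\mathbb{E}\|C_w^{\epsilon}\mathbf{h}_G\|_2^2\le\mathbb{E}\|C_w^{\epsilon}\|_2^2\,\mathbb{E}\|\mathbf{h}_G\|_2^2$, assumes $\mathbb{E}[\mathbf{h}_G]=0$ and equal class priors, and uses the law of total covariance to identify the within-class trace with $\mathrm{Var}_{\mathrm{icv}}$. It then bounds the between-class trace by $(|\mathcal{Y}|-1)\tilde\rho^{2}/(|\mathcal{Y}|\,\|C\|_2^{2})$, attributing this to the margin condition. That last bound is a hypothesis on the centroid geometry, not an output of PAC-Bayes, and you need it or an equivalent.

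Your KL arithmetic also does not close. With $P=\mathcal{N}(0,\sigma^2 I)$ and $Q=\mathcal{N}(C,\sigma^2 I)$ you correctly have $\mathrm{KL}(Q\|P)=\|C\|_2^2/(2\sigma^2)$. Substituting your $\sigma^2$ gives $u\ln(4u)\bigl(\|C\|_2^2\,\mathrm{Var}_{\mathrm{icv}}+\tilde\rho^{2}\bigr)/(2\tilde\rho^{2})$, which is the required value times $\|C\|_2^2/2$, not a constant. The displayed numerator is exactly $1/(2\sigma^2)$ for the paper's $\sigma^2$. The paper asserts this value for a prior \emph{whose scale matches} $\sigma^2$ without computing it, so evaluating your own formula will not reproduce the statement.

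Your accounting of constants is also off, and your proposed repairs would change the inequality. In the paper, $n-1$ and $\ln(6n/\delta)$ come straight out of the lemma: the PAC-Bayes term $2\sqrt{(2\,\mathrm{KL}(\widetilde{Q}\|P)+\ln(2n/\delta))/(n-1)}$ combined with $\mathrm{KL}(\widetilde{Q}\|P)\le\mathrm{KL}(w+u\|P)+\ln 2$, from conditioning on an event of probability $Z\ge\frac12$. No grid and no held-out sample are involved. A union bound over grids of $\|C\|_2$ and $\mathrm{Var}_{\mathrm{icv}}$ values adds the logarithm of the grid size, and a held-out sample lowers the effective sample size. So neither lands on the displayed bound; the paper reaches it only by letting $\sigma$ depend on these data-dependent quantities directly. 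The factor $4$ multiplies only the $O(1/\sqrt{n})$ complexity term, so a constant lost in your Step 1 comparisons cannot be absorbed into it. It would appear as an extra $O(1)$ term or as a coefficient on the empirical $L_{MH}^{\rho,\rho'}$.

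Finally, the average-to-uniform obstacle you flag is genuine, and Markov does not dissolve it. Training graphs beyond the Markov threshold must be paid for by a term absent from the statement. Bounding the randomized loss directly gives a bound for the Gibbs predictor rather than for $(f,g)$. Moreover, the population-side comparison in the derandomization step needs perturbation control on test graphs as well, which an empirical $\mathrm{Var}_{\mathrm{icv}}$ cannot supply. The paper does not overcome this either: it bounds only the averaged $\mathbb{E}\|f^{\epsilon}-f\|_2^2$ and equates it with $\tilde\rho^{2}$ as if it were the lemma's uniform condition.
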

The detailed proof is provided in Appendix~\ref{appendix:proof theorem 3.1}. Theorem~\ref{thm:abstention_gen} shows that the generalization performance of AbstainGNN is upper bounded by two components: the empirical margin-based loss and the intra-class variance of graph-level representations. Therefore, reducing this intra-class variance is crucial for improving the performance of graph classification with abstention.

\subsection{Learning Objective}
To this end, we introduce a novel regularization loss to reduce the intra-class variance of graph-level representations: 
\begin{equation}
\label{intra-class_variance_loss}
\mathcal{L}_{\mathrm{icv}} = \frac{1}{|\mathcal{Y}|}\sum_{y\in\mathcal{Y}} \sum_{G \in I_y} w_G \|\mathbf{h}_G - \bm{\mu}_y\|^2,
\end{equation}
where \(w_G\) is the maximum logit of \(f(\mathbf A,\mathbf X) \). 
Based on $\mathcal{L}_{\mathrm{icv}}$, the total learning objective of AbstainGNN is 
\begin{equation}
\label{total_loss}
\mathcal{L}_{\mathrm{total}} \;=\; \mathcal{L}_{\mathrm{ce}}+\lambda_{r}\,\mathcal{L}_{\mathrm{icv}},
\end{equation}
where $\mathcal{L}_{\mathrm{ce}}$ is the standard cross-entropy loss for graph classification, corresponding to a smooth relaxation of the margin loss in the above generalization analysis, and $\lambda_{r}$ is a hyper-parameter for balancing $\mathcal{L}_{\mathrm{ce}}$ and $\mathcal{L}_{\mathrm{icv}}$. We theoretically analyze the optimization impact of $\mathcal{L}_{\mathrm{total}}$ for $\mathrm{Var}_{\mathrm{icv}}(\mathcal{S},\mathcal{Y})$: 


\begin{theorem}
\label{thm:intra_contraction}

Let \(\mathbf{h}_G^{(t)}\in\mathbb{R}^d\) and $\bm{\mu}_y^{(t)}$ denote the graph-level representation and the class cluster of $y$ at the iteration \(t\). We abbreviate $\mathrm{Var}_{\mathrm{icv}}(\mathcal{S},\mathcal{Y})$ at \(t\) as $M^{(t)}=\frac{1}{|\mathcal{Y}|}\sum_{y\in\mathcal{Y}}\frac{1}{|I_y|}\sum_{G\in I_y}\bigl\|\mathbf{h}_G^{(t)}-\bm{\mu}_y^{(t)}\bigr\|_2^2.$ Consider the following gradient-based updates: 
\begin{equation}
\mathbf{h}_G^{(t+1)}=\mathbf{h}_G^{(t)}-\eta \nabla_{\mathbf{h}_G^{(t)}}\mathcal{L}_{\mathrm{total}},
\qquad
\bm{\mu}_y^{(t+1)}=\frac{1}{|I_y|}\sum_{G\in I_y} \mathbf{h}_G^{(t+1)},
\end{equation}
where $\eta$ is the learning rate. Then $\mathrm{Var}_{\mathrm{icv}}(\mathcal{S},\mathcal{Y})$ has the following one-step recursion: 
\begin{equation}
\begin{aligned}
    M^{(t+1)} \;\le&\; \bigl(1-\frac{4\lambda_{r}\eta_t}{|\mathcal{Y}|\min_y|I_y|}
+\frac{12\lambda_{r}^2\eta_t^2}{\max_y|I_y|^2}\bigr)\,M^{(t)}\\
&+ O(\max_{G,t}\|\nabla_{\mathbf{h}_G^{(t)}}\mathcal{L}_{\mathrm{ce}}\|_2^2)\,\eta.
\end{aligned}
\end{equation}
If $0<\eta<\delta=\min\{\frac{\max_y|I_y|^2}{6|\mathcal{Y}|\lambda_{r}\min_y|I_y|},\frac{|\mathcal{Y}|\min_y|I_y|}{2\lambda_{r}}\}$ and $\rho_{\eta}=(1-\frac{2\lambda_{r}}{|\mathcal{Y}|\min_y|I_y|}\eta)\in(0,1)$, then we have 
\begin{equation}
\label{contraction process}
M^{(t)} \;\le\; \rho_{\eta}^{\,t}\,M^{(0)} + \frac{\eta^2}{1-\rho_{\eta}}\,O(\max_{G,t}\|\nabla_{\mathbf{h}_G^{(t)}}\mathcal{L}_{\mathrm{ce}}\|_2^2).
\end{equation}
In particular, \(M^{(t)}\) converges to an \(O(\eta)\)-neighborhood of zero.
\end{theorem}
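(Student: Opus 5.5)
We treat the graph-level representations $\mathbf{h}_G^{(t)}$ as free variables updated by gradient descent. The plan is to write the update as a centering-preserving contraction plus a perturbation from $\mathcal{L}_{\mathrm{ce}}$, and then unroll the recursion.

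\textbf{Step 1: gradient of $\mathcal{L}_{\mathrm{icv}}$.} First we compute $\nabla_{\mathbf{h}_G}\mathcal{L}_{\mathrm{icv}}$. Three simplifications are needed:
\begin{itemize}
\item treat the weights $w_G$ as fixed and bounded (they are maximum probabilities, so $w_G\in[0,1]$; we absorb them into constants, or take $w_G\equiv 1$ for the leading term);
\item include the dependence of $\bm{\mu}_y$ on $\mathbf{h}_G$;
\item use that the deviations $\mathbf{h}_{G'}-\bm{\mu}_y$ within a class sum to zero, which removes the cross term from the centroid dependence.
\end{itemize}
This gives $\nabla_{\mathbf{h}_G}\mathcal{L}_{\mathrm{icv}}=\frac{2}{|\mathcal{Y}|}(\mathbf{h}_G-\bm{\mu}_y)$, up to the weighting. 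Let $\mathbf{d}_G^{(t)}=\mathbf{h}_G^{(t)}-\bm{\mu}_y^{(t)}$ and $\mathbf{e}_G^{(t)}=\nabla_{\mathbf{h}_G}\mathcal{L}_{\mathrm{ce}}$. Because $\bm{\mu}_y^{(t+1)}$ is the exact class mean, the update becomes
\[
\mathbf{d}_G^{(t+1)}=\Bigl(1-\tfrac{2\lambda_r\eta}{|\mathcal{Y}|}\Bigr)\mathbf{d}_G^{(t)}-\eta\bigl(\mathbf{e}_G^{(t)}-\bar{\mathbf{e}}_y^{(t)}\bigr),
\]
where $\bar{\mathbf{e}}_y^{(t)}$ is the class mean of the $\mathbf{e}_G^{(t)}$.

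\textbf{Step 2: one-step recursion.} We square this expression, average over $G\in I_y$ with weight $1/|I_y|$, and then average over $y$.
\begin{itemize}
\item The linear part gives $(1-c\eta)^2=1-2c\eta+c^2\eta^2$ times $\frac{1}{|I_y|}\sum_G\|\mathbf{d}_G\|^2$.
\item Bound the cross term with Young's inequality, $2\eta\langle \mathbf{d},\mathbf{e}\rangle\le \tau\eta\|\mathbf{d}\|^2+\eta\|\mathbf{e}\|^2/\tau$. This gives up part of the contraction and produces the $O(\max\|\nabla\mathcal{L}_{\mathrm{ce}}\|^2)\eta$ term.
\item Use $\|\mathbf{e}-\bar{\mathbf{e}}\|^2\le 4\max\|\mathbf{e}\|^2$.
\end{itemize}
The specific constants in the statement, $4\lambda_r/(|\mathcal{Y}|\min_y|I_y|)$ and $12\lambda_r^2/\max_y|I_y|^2$, arise if the $1/|I_y|$ normalization is carried through the gradient of the unnormalized sum in $\mathcal{L}_{\mathrm{icv}}$. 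The per-class contraction factors are then bounded uniformly by the worst class using $\min_y|I_y|$ and $\max_y|I_y|$. I expect this constant-tracking, together with how the $w_G$ weights and the centroid dependence enter, to be the main obstacle. My first attempt would be to prove the recursion with generic constants $a\eta-b\eta^2$ and then match them to the stated ones by relaxing inequalities.

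\textbf{Step 3: unrolling.} For $\eta<\delta$, the quadratic term is at most half the linear one, so the one-step bound gives $M^{(t+1)}\le \rho_\eta M^{(t)}+K\eta$, with $\rho_\eta\in(0,1)$ by the second condition in $\delta$. Induction then gives
\[
M^{(t)}\le\rho_\eta^tM^{(0)}+K\eta\sum_{s<t}\rho_\eta^s\le \rho_\eta^tM^{(0)}+\frac{K\eta}{1-\rho_\eta}.
\]
Since $1-\rho_\eta=\Theta(\eta)$, the residual term is $O(1)\cdot K$.

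To get the stated $\eta^2/(1-\rho_\eta)$ form, and hence an $O(\eta)$ neighborhood, the perturbation must be $O(\eta^2)$ per step rather than $O(\eta)$. So in Step 2 I would redo the cross-term bound by choosing $\tau$ proportional to the contraction rate $c=2\lambda_r/(|\mathcal{Y}|\min_y|I_y|)$. Once $\tau$ no longer scales with $\eta$, the cross term contributes $\eta^2\|\mathbf{e}\|^2$-order error relative to the $\Theta(\eta)$ contraction. Combining this with $1-\rho_\eta=\Theta(\eta)$ yields the residual $O(\eta)$ and the convergence claim.
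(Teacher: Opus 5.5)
Your skeleton (exact class means so that only $\mathbf{e}_G-\bar{\mathbf{e}}_y$ survives, then square, average, unroll) is the paper's. The gap is in Step 3, the step meant to deliver the stated $\eta^2/(1-\rho_\eta)$ residual.

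\textbf{Why the $\tau$ fix fails.} Take $\tau$ of order $c$, independent of $\eta$, in $2\eta|\langle\mathbf{d}_G,\mathbf{e}_G-\bar{\mathbf{e}}_y\rangle|\le\tau\eta\|\mathbf{d}_G\|_2^2+\frac{\eta}{\tau}\|\mathbf{e}_G-\bar{\mathbf{e}}_y\|_2^2$. This leaves an error of order $\eta/c$ per step, not $\eta^2$. The other split, $\tau\|\mathbf{d}_G\|_2^2+\frac{\eta^2}{\tau}\|\mathbf{e}_G-\bar{\mathbf{e}}_y\|_2^2$, gives an $\eta^2$ error only at the cost of an $\eta$-independent loss $\tau M$, which destroys the $\Theta(\eta)$ contraction. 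No choice of $\tau$ works while $\nabla\mathcal{L}_{\mathrm{ce}}$ is treated as an arbitrary bounded perturbation. If $\mathbf{e}_G^{(t)}\equiv\mathbf{e}_G$ with $\mathbf{e}_G\neq\bar{\mathbf{e}}_y$, your Step 1 recursion converges to the fixed point $\mathbf{d}_G^{\ast}=-(\mathbf{e}_G-\bar{\mathbf{e}}_y)/c$, which does not depend on $\eta$. So $M^{(t)}$ settles at a level of order $\max\|\nabla\mathcal{L}_{\mathrm{ce}}\|_2^2/c^2$. Your first computation in Step 3 (residual $K\eta/(1-\rho_\eta)=O(1)$) is therefore what the generic argument actually proves.

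\textbf{The paper does not supply the missing step.} It gets $\eta^2$ only by writing $\|\mathbf{u}+\mathbf{v}\|_2^2\le\|\mathbf{u}\|_2^2+\|\mathbf{v}\|_2^2$, i.e.\ by discarding the cross term, and that inequality is false in general.

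\textbf{What is missing is a sign for the cross term.} When the $\mathbf{h}_G$ are updated freely, $\mathbf{e}_G=\nabla\phi_y(\mathbf{h}_G)$ with $\phi_y(\mathbf{h})=\frac{1}{n}\ell_{\mathrm{ce}}(\mathbf{W}\mathbf{h}+\mathbf{b},y)$, which is convex and the same for every $G\in I_y$. Using $\sum_{G\in I_y}\mathbf{d}_G=0$ and monotonicity of gradients of convex functions,
\[
\sum_{G\in I_y}\langle\mathbf{d}_G,\mathbf{e}_G-\bar{\mathbf{e}}_y\rangle=\frac{1}{2|I_y|}\sum_{G,G'\in I_y}\langle\mathbf{h}_G-\mathbf{h}_{G'},\mathbf{e}_G-\mathbf{e}_{G'}\rangle\ge 0 .
\]
So after averaging, the leading cross term $-2\eta(1-c\eta)\langle\mathbf{d}_G,\mathbf{e}_G-\bar{\mathbf{e}}_y\rangle$ can be dropped. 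Only $O(\eta^2)$ remainders are left, and the $\eta^2/(1-\rho_\eta)$ bound and the $O(\eta)$ neighbourhood then follow legitimately.

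\textbf{Step 1 needs repair.} The centroid cross term cancels only for uniform weights. With $w_G$ the maximal softmax probability (treated as frozen), the gradient in the paper's per-class normalization is $\frac{2}{|I_y|}(w_G\mathbf{d}_G-s_y)$ with $s_y=\frac{1}{|I_y|}\sum_{G'\in I_y}w_{G'}\mathbf{d}_{G'}\neq 0$. Contraction also needs the lower bound $w_G\ge 1/|\mathcal{Y}|$. With only $w_G\in[0,1]$ there is no contraction, and setting $w_G\equiv1$ is not justified. The paper keeps $s_y$ and bounds $\|s_y\|_2\le\sqrt{M_y}$, which produces the extra $8\lambda_r^2\eta^2/|I_y|^2$ (hence $12=4+8$). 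The bound $w_G\ge1/|\mathcal{Y}|$ produces the $|\mathcal{Y}|$ in the rate.

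\textbf{The constant matching cannot work.} Bounding the per-class factor $1-\frac{4\lambda_r\eta}{|\mathcal{Y}||I_y|}+\frac{12\lambda_r^2\eta^2}{|I_y|^2}$ uniformly requires $\max_y|I_y|$ in the linear term and $\min_y|I_y|$ in the quadratic term. That is the reverse of the statement, and of the paper's last step. In addition, any Young-type treatment of the cross term gives up part of the linear coefficient. So ``relaxing inequalities'' can only reach constants weaker than the stated ones when class sizes differ.
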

The detailed proof is provided in Appendix~\ref{appendix:proof theorem 3.3}. Theorem~\ref{thm:intra_contraction} provides an optimization-based theoretic justification for introducing $\mathcal{L}_{\mathrm{icv}}$. Specifically, under the gradient descent on \(\mathcal{L}_{\mathrm{total}}\), $\mathrm{Var}_{\mathrm{icv}}(\mathcal{S},\mathcal{Y})$ satisfies a contraction inequality across iterations and converges to a tighter \(O(\eta)\)-neighborhood as the gradient of $\mathcal{L}_{\mathrm{ce}}$ diminishes during training. Consequently, minimizing $\mathcal{L}_{\mathrm{ce}}$ can reduce $\mathrm{Var}_{\mathrm{icv}}(\mathcal{S},\mathcal{Y})$, thereby tightening the resulting generalization bound.

\begin{figure}[ht]
    \centering
    \includegraphics[width=\linewidth]{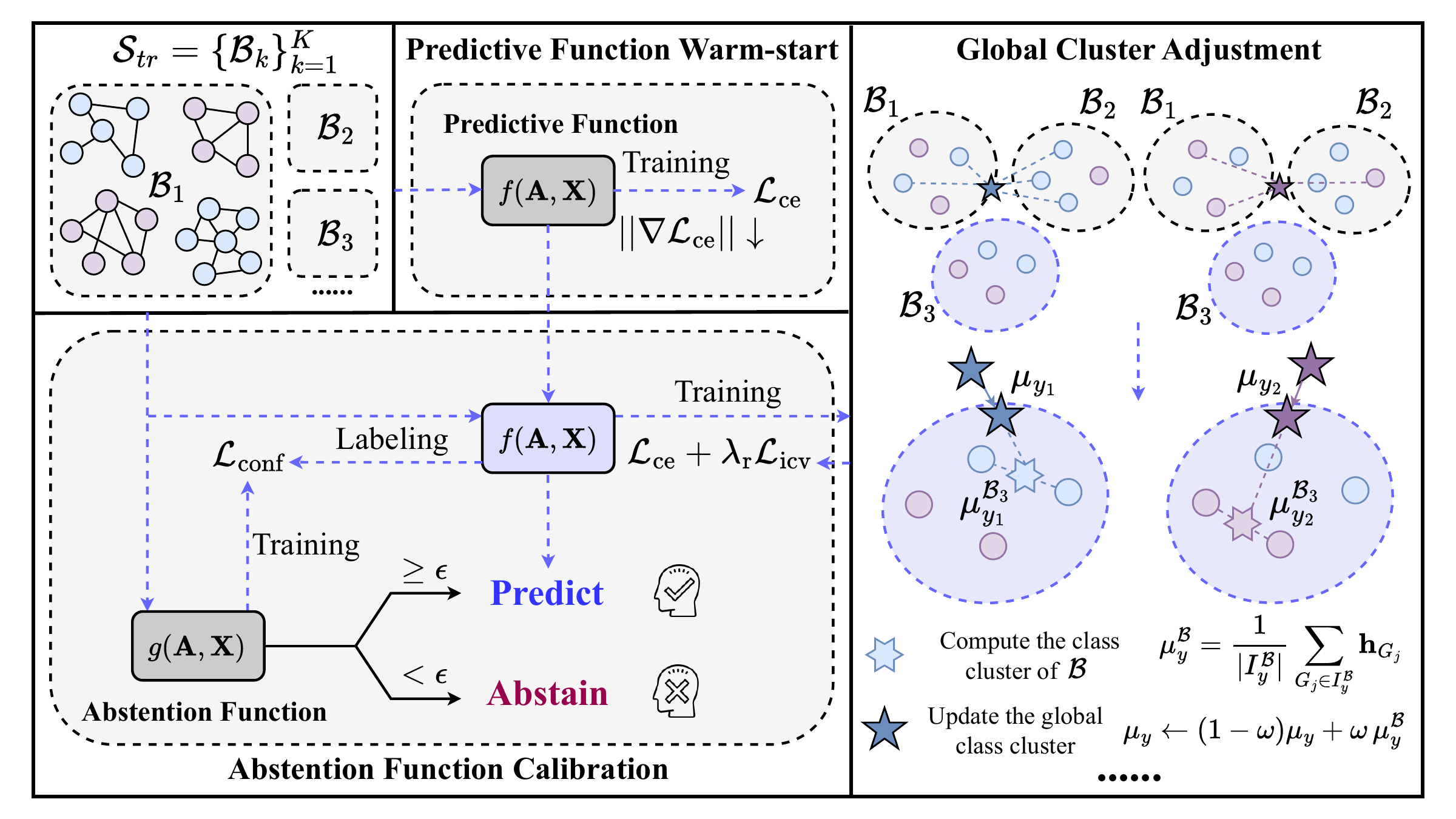}
    \caption{A simple model overview of AbstainGNN. It mainly consists of predictive function warm-start and abstention function calibration. The key operation in AbstainGNN is global cluster adjustment, which helps to more accurately estimate $\mathcal{L}_{\mathrm{icv}}$.}
    \label{fig:AbstainGNN}
\end{figure}

\subsection{Concrete Implementation}

Based on the analyses of Theorems~\ref{thm:abstention_gen} and~\ref{thm:intra_contraction}, we present a concrete implementation for optimizing the learning objective of AbstainGNN, which consists of two learning stages: \emph{Predictive Function Warm-start} and \emph{Abstention Function Calibration}. In the first stage, we employ a warm-start strategy for the predictive function $f$. Since Theorem~\ref{thm:intra_contraction} indicates that the reduction of $\mathrm{Var}_{\mathrm{i c v}}(\mathcal{S},\mathcal{Y})$ depends on the magnitude of $\|\nabla \mathcal{L}_{\mathrm{ce}}\|$. Therefore, we first pretrain $f$ with $\mathcal{L}_{\mathrm{ce}}$ to reach a small-gradient region, which accelerates the subsequent optimization of the generalization bound. In the second stage, we introduce global cluster adjustment to mitigate the estimation bias of $\bm{\mu}_y$ caused by batch-wise updates. Using the learned $f$, we then calibrate the abstention function $g$ so that its confidence scores are well-aligned with $f$'s predictions, achieving a favorable trade-off between coverage and risk for graph classification. Figure~\ref{fig:AbstainGNN} shows the model overview of AbstainGNN. 

\paragraph{Predictive Function Warm-start}
Given the training dataset $\mathcal{S}_{tr}=\{(G_i,y_i)\}_{i=1}^{|\mathcal{S}_{tr}|} \subset \mathcal{G}\times \mathcal{Y}$, we perform a warm-start strategy using $\mathcal{L}_{\mathrm{ce}}$ to drive the predictive function $f$ into a small-gradient region, yielding a stable initialization for the subsequent optimization. Following a batch-wise training manner, we first randomly shuffle $\mathcal{S}_{tr}$ and partition it into a collection of batches, denoted by $\mathcal{S}_{tr}=\{\mathcal{B}_k\}_{k=1}^{K}$ where $K$ is number of batches. Then, for each batch data $\mathcal{B}=\{(G_i,y_i)\}_{i=1}^{|\mathcal{B}|}$, we use the following batch-wise loss: 
\begin{equation}
\mathcal{L}_{\mathrm{ce}}(\mathcal{B})
= \frac{1}{|\mathcal{B}|}\sum_{i=1}^{|\mathcal{B}|}\ell_{ce}\big(\hat y_i,y_i\big),
\end{equation}
where $\ell_{\mathrm{ce}}(\cdot,\cdot)$ denotes the cross-entropy loss of the per-sample $G_i=(\mathbf{A}_i,\mathbf{X}_i)$, and $\hat y_i=\max_{y\in\mathcal{Y}}f_y(\mathbf{A}_i,\mathbf{X}_i)$ denotes the predicted class of $G_i$. $\mathcal{L}_{\mathrm{ce}}$ used in Eq.(\ref{total_loss}) is the cross-entropy loss computed over all batches sampled from $\mathcal{S}_{tr}$.

\paragraph{Abstention Function Calibration} To calculate the regularization loss in Eq.(\ref{intra-class_variance_loss}), we have to estimate the class cluster for each class in the representation space. A natural choice for graph classification is to estimate these class clusters via batch-wise updates. However, such updates are often biased due to random sampling and limited batchsizes, which in turn weakens the effectiveness of the intra-class variance regularization in Eq.(\ref{intra-class_variance_loss}). To address this issue, we introduce \emph{Global Cluster Adjustment} that maintains global class clusters smoothly throughout the entire training process. 
\par
Specifically, given $\mathcal{B}=\{(G_i,y_i)\}_{i=1}^{|\mathcal{B}|}$, we generate each graph-level representation $\mathbf{h}_{G_i}$ for  $\mathcal{B}$ according to Eq.(\ref{graph_representations}). For each class $y$, let $I_y^{\mathcal{B}}=\{G_i:y_i=y\}\subset\mathcal{B}$ that graph samples belonging to the class $y$. When $|I_y^{\mathcal{B}}|>0$, the class cluster of $\mathcal{B}$ is computed as
\begin{equation}
\bm{\mu}_y^{\mathcal{B}}
=
\frac{1}{|I_y^{\mathcal{B}}|}
\sum_{G_i\in I_y^{\mathcal{B}}}
\mathbf{h}_{G_i}.
\label{eq:batch_center}
\end{equation}
When $|I_y^{\mathcal{B}}|=0$, we simply skip the update for class $y$ in this iteration. We then update the global class cluster $\bm{\mu}_y$:
\begin{equation}
\bm{\mu}_y \leftarrow (1-\omega)\bm{\mu}_y + \omega\,\bm{\mu}_y^{\mathcal{B}},
\label{eq:ema_update}
\end{equation}
where $\omega\in(0,1)$ is a hyper-parameter to balance the contribution of historical information and newly observed data. Through this way, $\{\bm{\mu}_y\}_{y\in\mathcal{Y}}$ provides a global approximation of class clusters under the batch-wise training manner. Based on these updated clusters, we use the loss defined in Eq.(\ref{total_loss}) to optimize $f$. 
\par
We then train the graph-level abstention function $g$ to predict whether the current prediction of $f$ is correct on given graphs, thereby enabling abstention and coverage–risk evaluation. For each training sample $(G_i,y_i) \in \mathcal{B}$, we denote its predicted class as $\hat{y}_i$. A binary target label is introduced to indicate whether the prediction of $f$ is correct:
\begin{equation}
t_i=\mathbb{I}\big[\hat{y}_i=y_i\big]\in\{0,1\}.
\end{equation}
This formulation cast the confidence calibration of $g$ into  a binary classification, where $g$ is encouraged to assign high confidence scores to correctly predicted graphs and low confidence scores to misclassified ones. In AbstainGNN, $g$ is also implemented as a GNN following the calculation process described in Eq.(\ref{GNN_composite_map}--\ref{graph_representations}), the main difference is that the output of $g$ is mapped to a scalar representing the confidence score $\hat{t}_i$. The batch-wise loss of $\mathcal{B}$ for $g$ is defined as

\begin{equation}
\mathcal{L}_{\mathrm{conf}} (\mathcal{B})
=
\frac{1}{|\mathcal{B}|}
\sum_{(G_i,y_i)\in\mathcal{B}}\ell_{ce}\big(\hat{t}_i,t_i\big)
\end{equation}
The total loss used to optimize $g$ is $\mathcal{L}_{\mathrm{conf}}$, which is also a cross entropy loss evaluated over all batches drawn from $\mathcal{S}_{tr}$.

\begin{table*}[t]
\centering
\small
\caption{Classification Error Rate (risk) at different coverage levels (\%) on MUTAG, PROTEINS, and NCI1.}
\label{tab:chemical_graphs}
\begin{adjustbox}{width=0.9\textwidth}
\begin{tabular}{c|ccccccccc|c}
\toprule
{\footnotesize\diagbox[width=2cm,height=0.7cm]{\textbf{Method}}{\textbf{Coverage}}}
& \databar{10} & \databar{30} & \databar{50} & \databar{70}
& \databar{75} & \databar{80} & \databar{85} & \databar{90} & \databar{95} & \textbf{Rank}\\
\arrayrulecolor{black}\specialrule{0.08em}{0.2pt}{0.2pt}

\multicolumn{11}{>{\columncolor{gray!10}\centering}c}{\textbf{PROTEINS} \quad Classification Error Rate (\%) $\downarrow$} \\
\arrayrulecolor{black}\specialrule{0.08em}{0.2pt}{0.2pt}

SR
& \underline{3.4$\pm$5.1} & 10.7$\pm$3.0 &  \underline{14.5$\pm$0.7} & \underline{17.9$\pm$1.1}
& 19.6$\pm$1.7 & 20.7$\pm$1.6 & 21.7$\pm$1.3 & 23.5$\pm$1.1 & 23.9$\pm$0.9 & \underline{2.8}\\

MC-Dropout
& 5.2$\pm$4.3 & 15.2$\pm$4.7 & 18.8$\pm$2.3 & 21.3$\pm$1.6
& 21.4$\pm$1.5 & 21.6$\pm$1.8 & 22.2$\pm$1.6 & \underline{22.1$\pm$1.7} & \underline{23.0$\pm$1.6} & 4.6\\

Deep Gamblers
& 7.8$\pm$5.1 & 13.4$\pm$4.1 & 19.5$\pm$1.8 & 23.6$\pm$2.8
& 25.4$\pm$2.1 & 25.8$\pm$2.4 & 25.8$\pm$2.3 & 25.5$\pm$2.1 & 24.7$\pm$2.4 & 7.3\\

SAT
& 6.1$\pm$5.0 & 12.8$\pm$3.7 & 17.5$\pm$1.5 & 23.5$\pm$1.9
& 24.2$\pm$1.7 & 25.2$\pm$1.8 & 25.5$\pm$1.6 & 25.4$\pm$2.0 & 24.7$\pm$2.1 & 6.1\\

CCL-SC
& 3.5$\pm$5.1 & \underline{10.4$\pm$3.0} & 14.8$\pm$0.7 & 18.2$\pm$0.3
& \underline{19.5$\pm$1.2} & \underline{20.3$\pm$1.6} & \underline{21.4$\pm$1.4} & 23.4$\pm$0.8 & 23.7$\pm$1.1 & 2.7\\

NCwR
& 9.6$\pm$3.6 & 18.2$\pm$2.7 & 26.1$\pm$4.0 & 26.9$\pm$3.1
& 26.9$\pm$2.2 & 25.9$\pm$2.1 & 25.5$\pm$2.5 & 24.8$\pm$2.5 & 24.5$\pm$2.1 & 7.1\\

GraphPPD
& 4.5$\pm$3.2 & 11.0$\pm$4.6 & 15.4$\pm$1.8 & 20.1$\pm$2.1 
& 21.0$\pm$2.1 & 21.5$\pm$1.9 & 23.2$\pm$1.1 & 24.5$\pm$1.1 & 24.7$\pm$1.3 & 4.4\\

\arrayrulecolor{black}\specialrule{0.03em}{0.2pt}{0.2pt}
\textbf{AbstainGNN}
& \textbf{1.8$\pm$3.6} & \textbf{9.5$\pm$4.1} & \textbf{12.9$\pm$1.2}
& \textbf{17.4$\pm$1.0} & \textbf{18.2$\pm$1.0} & \textbf{19.3$\pm$1.3}
& \textbf{20.5$\pm$0.6} & \textbf{21.7$\pm$1.0} & \textbf{22.7$\pm$1.2} & \textbf{1.0} \\

\arrayrulecolor{black}\specialrule{0.08em}{0.2pt}{0.2pt}

\multicolumn{11}{>{\columncolor{gray!10}\centering}c}{\textbf{MUTAG} \quad Classification Error Rate (\%) $\downarrow$} \\
\arrayrulecolor{black}\specialrule{0.08em}{0.2pt}{0.2pt}

SR
&  \textbf{0.0$\pm$0.0} & 1.7$\pm$3.7 & 8.4$\pm$2.9 & 13.8$\pm$2.1
& 15.7$\pm$3.2 & 17.3$\pm$4.4 & 16.3$\pm$4.0 & 17.1$\pm$4.0 & 17.8$\pm$5.4 & 4.1 \\

MC-Dropout
& \textbf{0.0$\pm$0.0} & 8.3$\pm$5.3 & 14.7$\pm$5.2 & 15.4$\pm$5.4
& 15.0$\pm$5.2 & 16.7$\pm$3.7 & 16.9$\pm$4.2 & 17.6$\pm$4.6 & 17.2$\pm$4.8 & 5.6 \\

Deep Gamblers
&  \textbf{0.0$\pm$0.0} &  \underline{1.7$\pm$3.3} &  \underline{5.3$\pm$3.3} & 15.4$\pm$4.9
& 17.1$\pm$3.5 & 18.0$\pm$4.5 & 17.5$\pm$4.7 & 17.6$\pm$4.6 & 18.3$\pm$3.8 & 5.2 \\

SAT
&  \textbf{0.0$\pm$0.0} &  \underline{1.7$\pm$3.3} & 7.4$\pm$2.6 & 16.9$\pm$5.8
& 17.1$\pm$5.2 & 18.0$\pm$5.0 & 18.1$\pm$5.4 & 19.4$\pm$4.8 & 18.9$\pm$4.4 & 5.8 \\

CCL-SC
&  \textbf{0.0$\pm$0.0} &  \underline{1.7$\pm$3.3} & 7.4$\pm$4.2 & 13.1$\pm$3.1
& 16.4$\pm$3.6 & 16.7$\pm$3.7 & 17.5$\pm$3.8 & 17.1$\pm$3.4 & \underline{16.7$\pm$3.9} & \underline{2.9} \\

NCwR
&  \textbf{0.0$\pm$0.0} & 3.3$\pm$6.7 & 9.5$\pm$3.9 & \underline{11.5$\pm$3.4}
& \underline{11.4$\pm$2.7} & \underline{12.0$\pm$1.6} & \underline{13.1$\pm$2.3} & \underline{15.3$\pm$3.9} & 17.2$\pm$4.8 & \underline{2.9} \\

GraphPPD
&  \textbf{0.0$\pm$0.0} & 3.6$\pm$5.0 & 7.4$\pm$2.9 & 14.8$\pm$2.7
& 15.0$\pm$4.7 & 16.7$\pm$4.7 & 17.5$\pm$3.6 & 17.6$\pm$3.6 & 17.8$\pm$4.6 & 4.2 \\

\arrayrulecolor{black}\specialrule{0.03em}{0.2pt}{0.2pt}
\textbf{AbstainGNN}
& \textbf{0.0$\pm$0.0} & \textbf{0.0$\pm$0.0} & \textbf{5.2$\pm$0.1} & \textbf{10.4$\pm$2.7}
& \textbf{10.7$\pm$1.5} & \textbf{11.5$\pm$1.8} & \textbf{12.5$\pm$2.1} & \textbf{13.3$\pm$3.2} & \textbf{14.8$\pm$2.7} & \textbf{1.0} \\

\arrayrulecolor{black}\specialrule{0.08em}{0.2pt}{0.2pt}

\multicolumn{11}{>{\columncolor{gray!10}\centering}c}{\textbf{NCI1} \quad Classification Error Rate (\%) $\downarrow$} \\
\arrayrulecolor{black}\specialrule{0.08em}{0.2pt}{0.2pt}

SR
& 10.6$\pm$5.2 & 16.1$\pm$2.2 & 22.5$\pm$1.3 & 25.8$\pm$1.4
& 26.9$\pm$1.5 & 27.9$\pm$1.7 & 28.6$\pm$1.4 & 29.5$\pm$1.3 & 30.2$\pm$1.6 & 3.7 \\

MC-Dropout
& 13.7$\pm$4.1 & 21.1$\pm$3.2 & 25.4$\pm$2.8 & 28.2$\pm$1.7
& 28.5$\pm$1.6 & 29.3$\pm$1.6 & 29.7$\pm$1.7 & 29.9$\pm$1.6 & 30.5$\pm$1.8 & 5.9 \\

Deep Gamblers
& 13.5$\pm$3.3 & 22.7$\pm$2.7 & 27.7$\pm$2.1 & 29.7$\pm$1.6
& 30.0$\pm$1.8 & 30.8$\pm$2.2 & 31.3$\pm$2.2 & 31.4$\pm$2.0 & 30.8$\pm$2.1 & 7.7 \\

SAT
& 9.6$\pm$4.2 & 19.5$\pm$3.2 & 26.4$\pm$2.2 & 28.6$\pm$1.4
& 29.4$\pm$1.2 & 29.8$\pm$1.1 & 30.4$\pm$1.2 & 30.6$\pm$1.3 & 30.7$\pm$1.1 & 5.4 \\

CCL-SC
& 11.1$\pm$5.1 & 15.6$\pm$1.6 & 22.4$\pm$1.1 & 26.4$\pm$1.7
& 27.4$\pm$1.5 & 28.2$\pm$1.3 & 29.0$\pm$1.4 & 29.6$\pm$1.3 & 30.3$\pm$1.5 & 3.6 \\

NCwR
& 12.3$\pm$3.7 & 23.3$\pm$2.5 & 27.7$\pm$1.7 & 29.7$\pm$2.0
& 30.1$\pm$1.9 & 30.2$\pm$2.1 & 30.5$\pm$1.7 & 30.7$\pm$1.5 & 30.5$\pm$1.6 & 6.8 \\

GraphPPD
& \underline{9.5$\pm$5.5} & \underline{15.3$\pm$3.1} & \underline{21.3$\pm$1.2} & \underline{25.2$\pm$1.5} 
& \underline{26.5$\pm$1.3} & \underline{27.2$\pm$1.5} & \underline{28.0$\pm$1.3} & \underline{29.0$\pm$1.1} & \underline{29.9$\pm$1.1} & \underline{2.0} \\

\arrayrulecolor{black}\specialrule{0.03em}{0.2pt}{0.2pt}
\textbf{AbstainGNN}
& \textbf{8.5$\pm$4.1} & \textbf{15.1$\pm$2.4} & \textbf{20.9$\pm$2.1} & \textbf{24.2$\pm$1.5}
& \textbf{24.9$\pm$1.6} & \textbf{25.7$\pm$1.5} & \textbf{26.8$\pm$1.8} & \textbf{27.6$\pm$1.4} & \textbf{28.4$\pm$1.6} & \textbf{1.0}\\

\bottomrule
\end{tabular}
    \end{adjustbox}
\end{table*}

\subsection{Computational complexity}
We analyze the computational complexity of AbstainGNN. We denote by $N$ and $E$ the average numbers of nodes and edges in graphs within a batch, respectively. AbstainGNN follows a two-stage training framework. In the first stage, the computation is dominated by the optimization of the predictive function $f$, resulting in a computational complexity of $\mathcal{O}(E d + N d^2)$. In the second stage, the training process consists of three main components. First, computing batch-wise class clusters incurs a cost of $\mathcal{O}(|\mathcal{B}| d)$ per batch. Second, computing the regularization loss (Eq.(\ref{intra-class_variance_loss})) require $\mathcal{O}(|\mathcal{B}| d)$; Third, training the graph-level abstention function $g$ has a computational complexity of $\mathcal{O}(E d + N d^2)$ per iteration. Aggregating these costs, AbstainGNN maintains an acceptable computational complexity of $\mathcal{O}((E+|\mathcal{B}|) d + N d^2)$. An empirical comparison of training runtime is provided in ~\ref{sec:Runtime-Comparison}. 


\begin{table}[h]
\centering
\caption{Statistics of used datasets.}
\label{tab:dataset_statistics}
\begin{tabular}{l c c c c}
\hline
Dataset & Graphs & Classes & Avg. nodes & Avg. edges \\
\hline
PROTEINS     & 1,113 & 2 & 39.1  & 72.8   \\
MUTAG        & 188   & 2 & 17.9  & 19.8   \\
NCI1         & 4,110 & 2 & 29.9  & 32.3   \\
IMDB-BINARY  & 1,000 & 2 & 19.8  & 96.5   \\
COLLAB       & 5,000 & 3 & 74.5  & 2457.8 \\
\hline
\end{tabular}
\end{table}

\section{Experiment}

\subsection{Experimental Setup}
\paragraph{Datasets} We conduct experiments on five widely used benchmark datasets for graph classification~\cite{morris2020tudataset}, including PROTEINS, MUTAG, NCI1, IMDB-BINARY, and COLLAB. Among them, MUTAG, PROTEINS, and NCI1 are chemical graph datasets, while IMDB-BINARY and COLLAB are social graph datasets. Following previous works~\cite{errica2019fair,bodnar2021weisfeiler}, we use the provided node attributes for chemical graph datasets, where each node is represented by a one-hot encoding of its atom type. For social graph datasets, where nodes do not have inherent attributes, we adopt node degree as the node attribute. The statistics of these datasets are summarized in Table ~\ref{sec:Dataset_Statistics}.   

\label{sec:Dataset_Statistics}
We use five benchmark datasets of graph classification, and the statistics of them are provided in Table~\ref{tab:dataset_statistics}.

\begin{table*}[!t]
\centering
\small
\caption{Classification Error Rate (risk) at different coverage levels (\%) on IMDB-BINARY and COLLAB.}
\label{tab:social_graphs}
\begin{adjustbox}{width=0.9\textwidth}
\begin{tabular}{c|ccccccccc|c}
\toprule

{\footnotesize\diagbox[width=2cm,height=0.7cm]{\textbf{Method}}{\textbf{Coverage}}}
& \databar{10} & \databar{30} & \databar{50} & \databar{70}
& \databar{75} & \databar{80} & \databar{85} & \databar{90} & \databar{95} & \textbf{Rank}\\

\arrayrulecolor{black}\specialrule{0.08em}{0.2pt}{0.2pt}
\multicolumn{11}{>{\columncolor{gray!10}\centering}c}{\textbf{IMDB-BINARY} \quad Classification Error Rate (\%) $\downarrow$} \\
\arrayrulecolor{black}\specialrule{0.08em}{0.2pt}{0.2pt}

SR
& \textbf{0.0$\pm$0.0} & 6.9$\pm$5.4 & 13.2$\pm$3.6 & 18.6$\pm$3.8
& 20.5$\pm$2.8 & 22.0$\pm$3.0 & 23.7$\pm$3.2 & 25.0$\pm$3.2 & 25.8$\pm$2.9 & 4.2 \\

MC-Dropout
& \textbf{0.0$\pm$0.0} & 11.3$\pm$5.1 & 19.8$\pm$2.1 & 22.7$\pm$1.7
& 23.5$\pm$2.0 & 23.1$\pm$1.9 & 23.6$\pm$2.1 & 24.3$\pm$1.5 & 25.1$\pm$2.0 & 5.7\\

Deep Gamblers
& \underline{1.0$\pm$2.2} & 10.9$\pm$2.0 & 17.4$\pm$2.4 & 22.0$\pm$2.2
& 23.0$\pm$2.3 & 23.6$\pm$2.5 & 24.2$\pm$3.0 & \underline{24.2$\pm$2.8} & \underline{24.6$\pm$2.8} & 5.6\\

SAT
& \textbf{0.0$\pm$0.0} & 7.2$\pm$2.3 & 14.7$\pm$2.7 & 19.7$\pm$3.3
& 20.8$\pm$2.8 & 22.4$\pm$2.0 & \underline{23.2$\pm$2.4} & 24.6$\pm$3.2 & 25.3$\pm$2.7 & 4.7 \\

CCL-SC
& \textbf{0.0$\pm$0.0} & \underline{6.9$\pm$4.8} & 13.4$\pm$3.9 & 18.7$\pm$3.5
& 20.8$\pm$2.6 & 22.1$\pm$2.8 & 23.2$\pm$2.7 & 25.1$\pm$2.9 & 25.5$\pm$2.5 & 3.9 \\

NCwR
& \textbf{0.0$\pm$0.0} & 7.2$\pm$2.3 & 13.8$\pm$2.9 & 20.1$\pm$2.4
& 21.5$\pm$2.2 & 21.8$\pm$1.6 & 23.8$\pm$1.4 & 24.3$\pm$2.2 & 25.1$\pm$2.3 & 4.6 \\

GraphPPD
& 2.0$\pm$2.4 & 9.7$\pm$3.4 & \textbf{12.4$\pm$3.6} & \underline{18.3$\pm$3.0}
& \underline{20.1$\pm$3.7} & \underline{21.4$\pm$3.0} & 23.3$\pm$2.3 & 24.7$\pm$3.5 & 25.2$\pm$3.1 & \underline{3.6}\\

\arrayrulecolor{black}\specialrule{0.03em}{0.2pt}{0.2pt}

\textbf{AbstainGNN}
& \textbf{0.0$\pm$0.0} & \textbf{6.3$\pm$3.1} & \underline{13.0$\pm$3.6} & \textbf{17.1$\pm$1.9}
& \textbf{18.9$\pm$2.6} & \textbf{19.9$\pm$2.4} & \textbf{21.8$\pm$2.4} & \textbf{22.9$\pm$2.7} & \textbf{23.3$\pm$2.2} & \textbf{1.1} \\


\arrayrulecolor{black}\specialrule{0.08em}{0.2pt}{0.2pt}
\multicolumn{11}{>{\columncolor{gray!10}\centering}c}{\textbf{COLLAB} \quad Classification Error Rate (\%) $\downarrow$} \\
\arrayrulecolor{black}\specialrule{0.08em}{0.2pt}{0.2pt}

SR
& \textbf{0.0$\pm$0.0} & 1.9$\pm$0.7 & 5.4$\pm$0.6 & \underline{11.6$\pm$0.7}
& \underline{13.2$\pm$0.5} & \underline{14.6$\pm$0.4} & \underline{16.3$\pm$0.5} & \underline{17.7$\pm$0.5} & \underline{18.8$\pm$0.4} & \underline{2.2} \\

MC-Dropout
& \textbf{0.0$\pm$0.0} & 1.5$\pm$0.5 & 8.5$\pm$0.3 & 15.5$\pm$0.5
& 16.6$\pm$0.4 & 17.6$\pm$0.7 & 18.4$\pm$0.7 & 19.1$\pm$0.8 & 19.7$\pm$0.6 & 6.0 \\

Deep Gamblers
& \textbf{0.0$\pm$0.0} & 4.7$\pm$1.8 & 9.3$\pm$0.8 & 12.7$\pm$1.0
& 13.9$\pm$0.9 & 15.3$\pm$1.1 & 16.7$\pm$0.8 & 17.8$\pm$0.7 & 19.2$\pm$0.7 & 4.7 \\

SAT
& \textbf{0.0$\pm$0.0} & 2.8$\pm$1.3 & 6.5$\pm$1.6 & 12.7$\pm$1.0
& 14.5$\pm$0.9 & 16.0$\pm$0.8 & 17.2$\pm$0.8 & 18.4$\pm$0.7 & 19.5$\pm$0.6 & 5.4 \\

CCL-SC
& \textbf{0.0$\pm$0.0} & 2.1$\pm$0.5 & 6.0$\pm$1.3 & 11.8$\pm$1.2
& 13.4$\pm$1.3 & 15.1$\pm$1.1 & 16.6$\pm$0.8 & 17.8$\pm$0.7 & 19.3$\pm$0.9 & 3.2 \\

NCwR
& \underline{5.8$\pm$2.0} & 10.5$\pm$3.4 & 14.1$\pm$3.0 & 16.7$\pm$1.8
& 17.3$\pm$1.4 & 17.9$\pm$1.3 & 18.5$\pm$0.9 & 18.9$\pm$0.7 & 19.3$\pm$0.5 & 6.9 \\

GraphPPD
& \textbf{0.0$\pm$0.0} & \underline{1.5$\pm$0.3} & \underline{4.8$\pm$0.7} &12.2$\pm$0.6
& 13.8$\pm$0.4 & 15.6$\pm$0.7 & 16.7$\pm$0.6 & 18.2$\pm$0.6 & 19.5$\pm$0.7 & 3.6 \\

\arrayrulecolor{black}\specialrule{0.03em}{0.2pt}{0.2pt}

\textbf{AbstainGNN}
& \textbf{0.0$\pm$0.0}  & \textbf{1.4$\pm$0.4} & \textbf{4.0$\pm$0.7} & \textbf{10.8$\pm$0.8}
& \textbf{12.1$\pm$0.5} & \textbf{13.9$\pm$0.8} & \textbf{15.7$\pm$0.4} & \textbf{17.4$\pm$0.5} & \textbf{18.0$\pm$0.4} & \textbf{1.0} \\

\bottomrule
\end{tabular}
\end{adjustbox}
\end{table*}

\paragraph{Compared Baselines} We describe compared baselines as follows,
\begin{itemize}

    \item \textbf{SR} \citep{geifman2017selective}: It is one of the most classic algorithms, which directly uses the maximum logit from the softmax layer as a confidence score to decide whether to abstain from making predictions. 

    \item \textbf{MC-Dropout} \citep{gal2016dropout}: Instead of relying on the maximum logit, this approach keeps dropout active at test time and performs multiple stochastic forward passes to estimate predictive uncertainty, which is then used as a confidence score for abstention.

    \item \textbf{Deep Gamblers} \citep{liu2019deep}: This method extends the original classification task by introducing an additional abstention class, enabling the model to predict the confidence of abstaining from making predictions. 

    \item \textbf{SAT} \citep{huang2020self}: Similar to Deep Gamblers, this method also introduces the abstention class. The key difference is that SAT can leverages model predictions during training to adaptively modulate each sample's contribution. 

    \item \textbf{CCL-SC} \citep{wu2024cclsc}: It is a competitive coverage-based method that leverages contrastive learning to aggregate the representations of samples in the same category, leading to improved performance under coverage constraints.
    
    \item \textbf{NCwR} \citep{kuchipudinode}: This is the first work that introduces abstention into node classification. It proposes both cost-based and coverage-based formulations, and we adopt the cost-based variant due to its superior empirical performance.

    \item \textbf{GraphPPD} \citep{pal2025graphppd}: It proposes a variational modeling framework to estimate the uncertainty as the rejection criterion.

\end{itemize}

\paragraph{Implementation Details}
\label{sec:implementation_details}
Since the first six baselines are not designed for graph-level tasks, we adopt the same GNN backbone as the predictive function for both our model and these baselines, following the optimal hyper-parameter settings reported in their original literature. For GraphPPD, we reproduce its proposed feature extractor and posterior approximation as described in the original paper. In our model, the predictive function $f$ is implemented using a simple GNN architecture consisting of 3 GCNConv layers~\cite{kipf2016gcn} as message-passing layers, with sum pooling as the readout operator. The graph-level abstention function $g$ shares the same GNN framework as $f$, except that it contains only 1 GCNConv layer. For all datasets, the training–testing split ratio is fixed at 0.8:0.2. We conduct experiments with five different random seeds and report the mean and variance of empirical results.
\par
To ensure a fair comparison, the dimensionality of the learned graph-level representations is fixed to 64 for all methods. The balanced factor $\omega$ in Eq.(\ref{eq:ema_update}) is fixed as 0.5. We use grid search to tune the remaining hyper-parameters. The search spaces are given here: $\lambda_{r}$ in Eq.(\ref{total_loss}) $\in \{0.1, 0.3, 0.5, 0.7, 0.9\}$, the number of warm-start epochs $T_{ws} \in \{ 50, 100, 150,175, 200\}$, and the batchsize $|\mathcal{B}| \in \{16, 32, 64, 128, 256\}$. In addition, the number of epochs in the second stage is fixed to 50, and an early stopping strategy is applied. Model training is terminated when the sum of the losses of $f$ and $g$ no longer decrease for three consecutive epochs.

\paragraph{Metrics} In classification with rejection, two key quantities are coverage and risk. Following the standard evaluation protocol~\cite{geifman2017selective,geifman2019selectivenet,charoenphakdee2021classification}, we first rank all test samples using the abstention function and then select the top-ranked samples for prediction at different coverage levels. Specifically, the coverage levels are varied from \{10\%, 30\%, 50\%, 70\%, 75\%, 80\%, 85\%, 90\%, 95\%\}. This design choice is motivated by the fact that, at low coverage levels, most methods can easily select high-confidence samples, which does not accurately reflect the true model capability. Therefore, we adopt a finer-grained partition at higher coverage levels. Risk is evaluated by the classification error rate.




\begin{table*}[t]
\centering
\small
\caption{Performance Comparisons of different model variants of AbstainGNN on PROTEINS and IMDB-BINARY.}
\label{tab:variant}
\begin{adjustbox}{width=0.9\textwidth}
\begin{tabular}{c|ccccccccc}
\toprule

{\footnotesize\diagbox[width=2.8cm,height=0.6cm]{\textbf{Method}}{\textbf{Coverage}}}
& \databar{10} & \databar{30} & \databar{50} & \databar{70}
& \databar{75} & \databar{80} & \databar{85} & \databar{90} & \databar{95} \\

\arrayrulecolor{black}\specialrule{0.08em}{0.2pt}{0.2pt}
\multicolumn{10}{>{\columncolor{gray!10}\centering}c}{\textbf{PROTEINS} \quad Classification Error Rate (\%) $\downarrow$} \\
\arrayrulecolor{black}\specialrule{0.08em}{0.2pt}{0.2pt}

AbstainGNN-GAT
& 4.5$\pm$4.5 & \textbf{9.2$\pm$4.6} & 15.3$\pm$2.2 & 21.5$\pm$2.2
& 21.8$\pm$2.2 & 21.7$\pm$2.0 & 22.1$\pm$1.8 & 22.3$\pm$1.8 & \textbf{22.5$\pm$2.0} \\

AbstainGNN-MLP
& 7.1$\pm$6.0 & 12.2$\pm$4.9 & 18.0$\pm$1.6 & 24.4$\pm$2.3 
& 25.2$\pm$2.4 & 25.5$\pm$2.0 & 25.7$\pm$1.9 & 25.7$\pm$2.2 & 25.0$\pm$2.0 \\

AbstainGNN-BC
& 6.1$\pm$6.6 & 12.5$\pm$5.1 & 17.7$\pm$3.3 & 23.2$\pm$2.7 
& 23.8$\pm$2.7 & 24.3$\pm$2.4 & 24.3$\pm$3.0 & 24.4$\pm$2.9 & 24.1$\pm$3.1 \\

\arrayrulecolor{black}\specialrule{0.03em}{0.2pt}{0.2pt}
\textbf{AbstainGNN}
& \textbf{1.8$\pm$3.6} & 9.5$\pm$4.1 & \textbf{12.9$\pm$1.2}
& \textbf{17.4$\pm$1.0} & \textbf{18.2$\pm$1.0} & \textbf{19.3$\pm$1.3}
& \textbf{20.5$\pm$0.6} & \textbf{21.7$\pm$1.0} & 22.7$\pm$1.2 \\

\arrayrulecolor{black}\specialrule{0.08em}{0.2pt}{0.2pt}
\multicolumn{10}{>{\columncolor{gray!10}\centering}c}{\textbf{IMDB-BINARY} \quad Classification Error Rate (\%) $\downarrow$} \\
\arrayrulecolor{black}\specialrule{0.08em}{0.2pt}{0.2pt}

AbstainGNN-GAT
& 3.0$\pm$2.5 & 11.7$\pm$3.0 & 15.0$\pm$1.8 & 19.3$\pm$2.1 
& 20.5$\pm$2.6 & 21.6$\pm$3.3 & 23.1$\pm$2.9 & 23.6$\pm$3.1 & 24.4$\pm$2.9 \\

AbstainGNN-MLP
& 8.4$\pm$3.8 & 16.9$\pm$3.2 & 21.4$\pm$4.2 & 23.9$\pm$3.7 
& 24.1$\pm$2.9 & 24.6$\pm$2.0 & 24.9$\pm$2.1 & 25.9$\pm$1.7 & 26.6$\pm$1.8 \\

AbstainGNN-BC
& 3.0$\pm$4.0 & 11.3$\pm$3.7 & 16.6$\pm$2.6 & 20.9$\pm$2.5 
& 22.0$\pm$2.4 & 22.8$\pm$2.5 & 23.5$\pm$1.6 & 23.9$\pm$2.4 & 25.3$\pm$1.6 \\

\arrayrulecolor{black}\specialrule{0.03em}{0.2pt}{0.2pt}

\textbf{AbstainGNN}
& \textbf{0.0$\pm$0.0} & \textbf{6.3$\pm$3.1} & \textbf{13.0$\pm$3.6} & \textbf{17.1$\pm$1.9}
& \textbf{18.9$\pm$2.6} & \textbf{19.9$\pm$2.4} & \textbf{21.8$\pm$2.4} & \textbf{22.9$\pm$2.7} & \textbf{23.3$\pm$2.2} \\

\bottomrule
\end{tabular}
\end{adjustbox}
\end{table*}

\begin{figure*}[t]
    \centering
    \includegraphics[width=\linewidth]{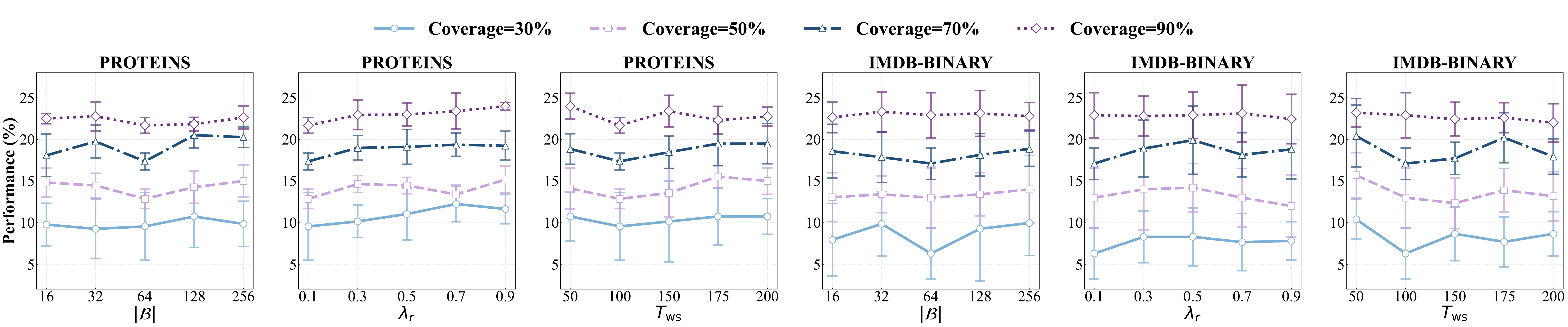}
    \caption{Hyperparameter Sensitivity Analysis of  AbstainGNN.}
    \label{fig:hyperparameter}
\end{figure*}

\subsection{Performance Comparison}

Table~\ref{tab:chemical_graphs} and Table~\ref{tab:social_graphs} report the risk comparison under different coverage levels on chemical graph datasets and social graph datasets, respectively. The best and second-best results are highlighted in bold and underlined. From these results, we can draw the following conclusions. \textbf{(1)} Across all datasets, AbstainGNN demonstrates clear improvements across different coverage levels. In particular, AbstainGNN yields an average relative risk reduction of 9.8\% on PROTEINS, 16.8\% on MUTAG, 4.8\% on NCI1, 15.8\% on IMDB-BINARY, and 5.9\% on COLLAB. \textbf{(2)} At low coverage levels, high-confidence samples are relatively easy to identify. As a result, on MUTAG, IMDB-BINARY, and COLLAB, many methods achieve zero risk when the coverage is set to 10\%. However, as the coverage increases, especially beyond 80\%, our method is still able to select reliable decision samples, leading to consistently superior performance. \textbf{(3)} The baselines show large performance discrepancies for different datasets. For instance, NCwR achieves competitive results on MUTAG, but performs moderately on social graph datasets. Compared with these baselines, our model exhibits superior generalization and consistently achieves the best overall performance across all datasets.

\begin{figure}[h]
    \centering
    \includegraphics[width=\linewidth]{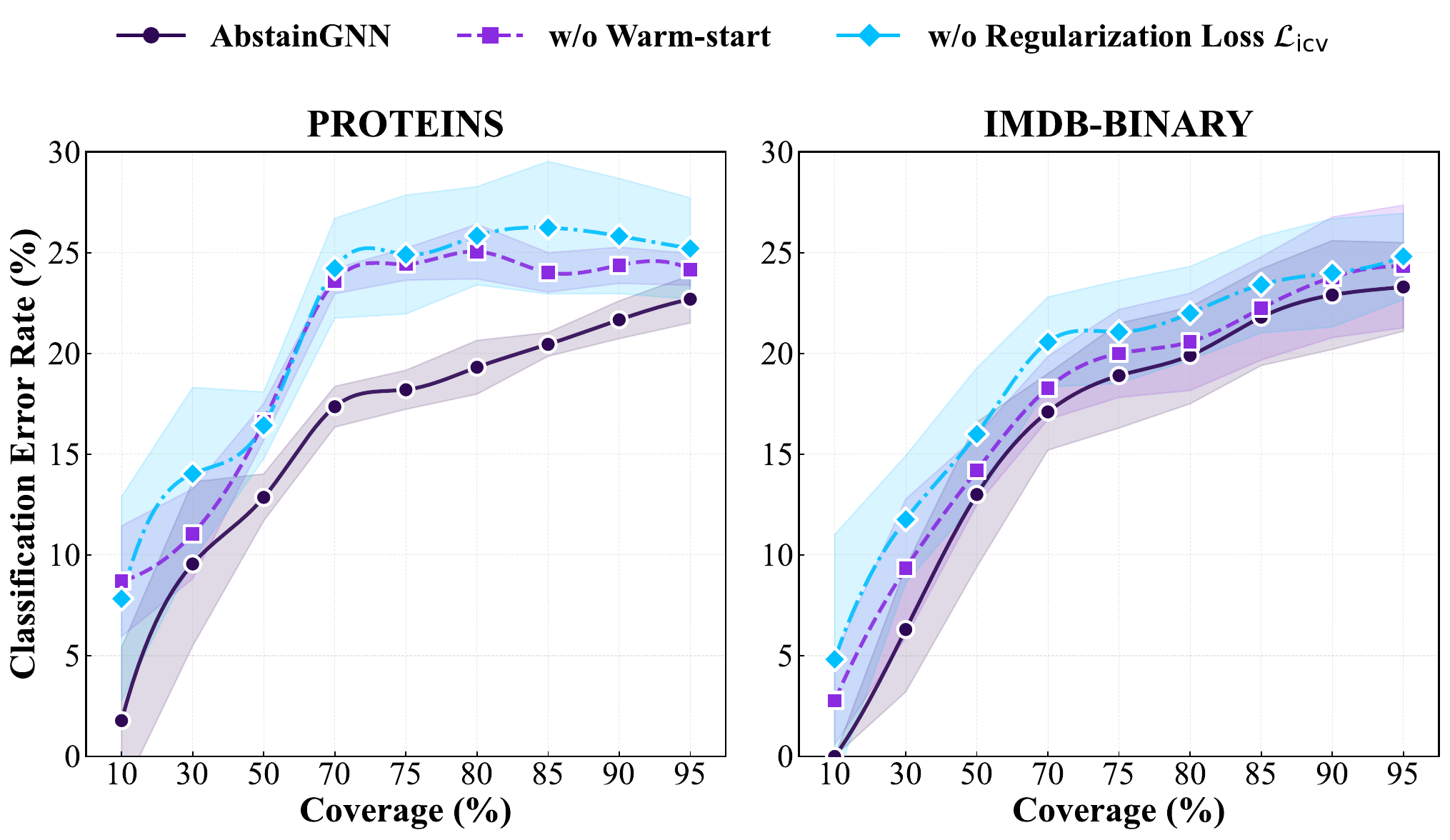}
    \caption{Ablation Study of AbstainGNN.}
    \label{fig:ablation}
\end{figure}

\subsection{Ablation Study} 
Figure~\ref{fig:ablation} presents an ablation study of AbstainGNN on PROTEINS and IMDB-BINARY across different coverage levels. `w/o Warm-start' refers to the one that removes the first stage of predictive function warm-start and only performs the second stage of abstention function calibration. `w/o Regularization Loss $\mathcal{L}_{\mathrm{icv}}$' denotes the one that excludes the intra-class variance regularization in Eq.(\ref{intra-class_variance_loss}), optimizing $f$ and $g$ solely with the cross-entropy loss. As shown in Figure~\ref{fig:ablation}, removing either component consistently degrades the risk–coverage trade-off, with the gap becoming more obvious at medium-to-high coverage levels. On PROTEINS, AbstainGNN achieves the lowest risk across all coverages, and its advantage grows substantially once the coverage exceeds 50\%. On IMDB-BINARY, we observe a similar but relatively milder trend. Overall, these results confirm the benefits of warm-start and $\mathcal{L}_{\mathrm{icv}}$: warm-start stabilizes the optimization process, while $\mathcal{L}_{\mathrm{icv}}$ strengthens the regularization of graph-level representations, which is particularly important at higher coverage levels.


\subsection{Model Variants}
In this section, we design three model variants: AbstainGNN-GAT, AbstainGNN-MLP, and AbstainGNN-BC. AbstainGNN-GAT replaces GCNConv layers with graph attention networks (GAT)~\cite{velivckovic2017graph} to model the abstention function $g$. AbstainGNN-MLP employs a multilayer perceptron (MLP) to model $g$, ignoring graph structural information. AbstainGNN-BC removes global cluster adjustment and instead relies on batch-wise class clusters to optimize $f$. Table~\ref{tab:variant} reports their comparisons on PROTEINS and IMDB-BINARY. From the results, we make the following observations. \textbf{(1)} AbstainGNN-MLP performs substantially worse than both AbstainGNN-GAT and AbstainGNN, indicating that the ability of $g$ to capture graph structural information is crucial. Although AbstainGNN-GAT slightly outperforms AbstainGNN on a few coverage levels on PROTEINS, AbstainGNN achieves the best overall performance. In addition, AbstainGNN-GAT incurs higher computational complexity than AbstainGNN, suggesting that designing more suitable GNN architectures for $g$ is a promising direction for future work. \textbf{(2)} Compared with AbstainGNN-BC, AbstainGNN consistently achieves superior results, showing that optimizing $\mathcal{L}_{\mathrm{icv}}$ based on batch-wise class clusters is insufficient. Maintaining global class clusters, as adopted in AbstainGNN, is a more effective strategy.


\subsection{Hyper-parameter Analysis}
Figure~\ref{fig:hyperparameter} analyzes the impact of three main hyper-parameters: $\lambda_{r}$,  $T_{ws}$, and $|\mathcal{B}|$, as described in Section~\ref{sec:implementation_details}. Several observations can be drawn from this figure. \textbf{(1)} Setting $\lambda_{r}=0.1$ leads to the best overall performance. This choice effectively reduces intra-class variance to improve confidence ranking, while avoiding overly compact graph-level representations that would reduce effective margins and degrade performance at high coverage levels. \textbf{(2)} The optimal warm-start epoch is $T_{\mathrm{ws}}=100$. This warm-start strategy aims to reduce the magnitude of the cross-entropy gradients. After sufficient warm-up, the updates of $\mathcal{L}_{\mathrm{ce}}$ become more stable, enabling $\mathcal{L}_{\mathrm{icv}}$ to be effectively optimized.     \textbf{(3)} A batchsize of $|\mathcal{B}|=64$ achieves the best trade-off between model performance and training stability. The small batchsizes yield noisy estimates of the batch-wise class clusters, and an excessively large batchsize can adversely affect the optimization process, potentially causing convergence issues.


\subsection{Visual Analysis}

We present a visual analysis of how the proposed regularization term, $\mathcal{L}_{\mathrm{icv}}$, reduces the intra-class variance of graph-level representations. Figure~\ref{fig:intra_curve} illustrates the class-wise intra-class variance during training on PROTEINS and IMDB-BINARY. During the warm-start stage, the variance remains relatively high and changes only slightly, as optimization is primarily driven by the standard cross-entropy loss. However, once the training transitions to the second stage at epoch 100, the variance sharply decreases for both classes on both datasets and continues to decline in subsequent epochs. The emergence of this distinct inflection suggests that $\mathcal{L}_{\mathrm{icv}}$ encourages graph-level representations to concentrate within tighter class clusters, thereby reducing the intra-class variance.

\begin{figure}[ht]
    \centering
    \includegraphics[width=\linewidth, height=4.5cm]{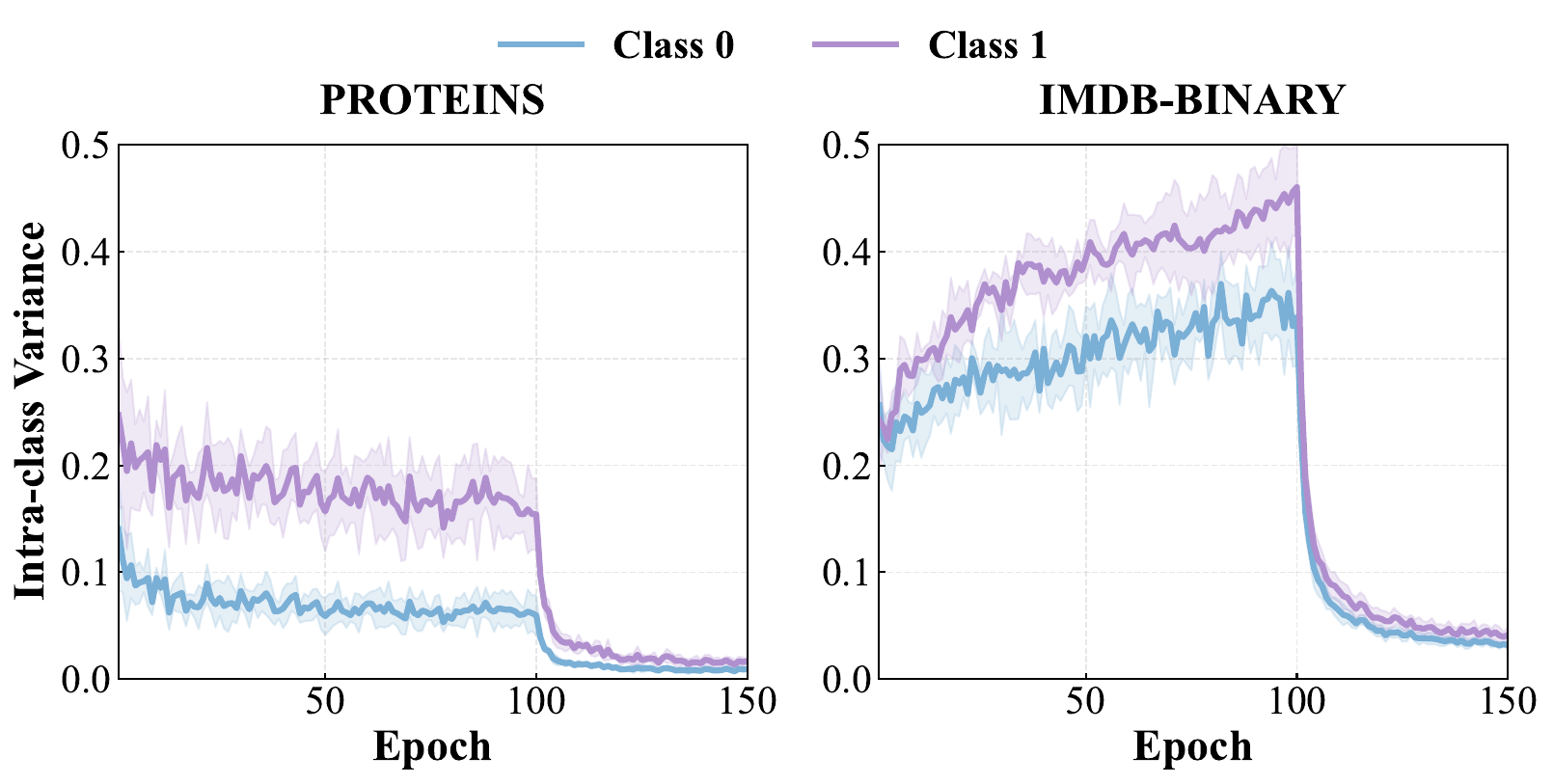}
    \caption{Visual analysis of intra-class variance curves.}
    \label{fig:intra_curve}
\end{figure}

\subsection{Runtime Comparison}
\label{sec:Runtime-Comparison}
We empirically compare the runtime of our model with that of competing baselines in terms of per-epoch training time. Experiments are conducted on PROTEINS and IMDB-BINARY under the same training setting. Figure~\ref{fig:AbstainGNN_training_time} reports the average training time of a single epoch. The results show that AbstainGNN incurs only a moderate computational overhead compared to all baselines, while remaining within an acceptable runtime range. This indicates that the proposed two-stage training strategy introduces limited additional cost, achieving a favorable balance between model effectiveness and computational efficiency.

\begin{figure}[ht]
    \centering
    \includegraphics[width=\linewidth,height=4.5cm]{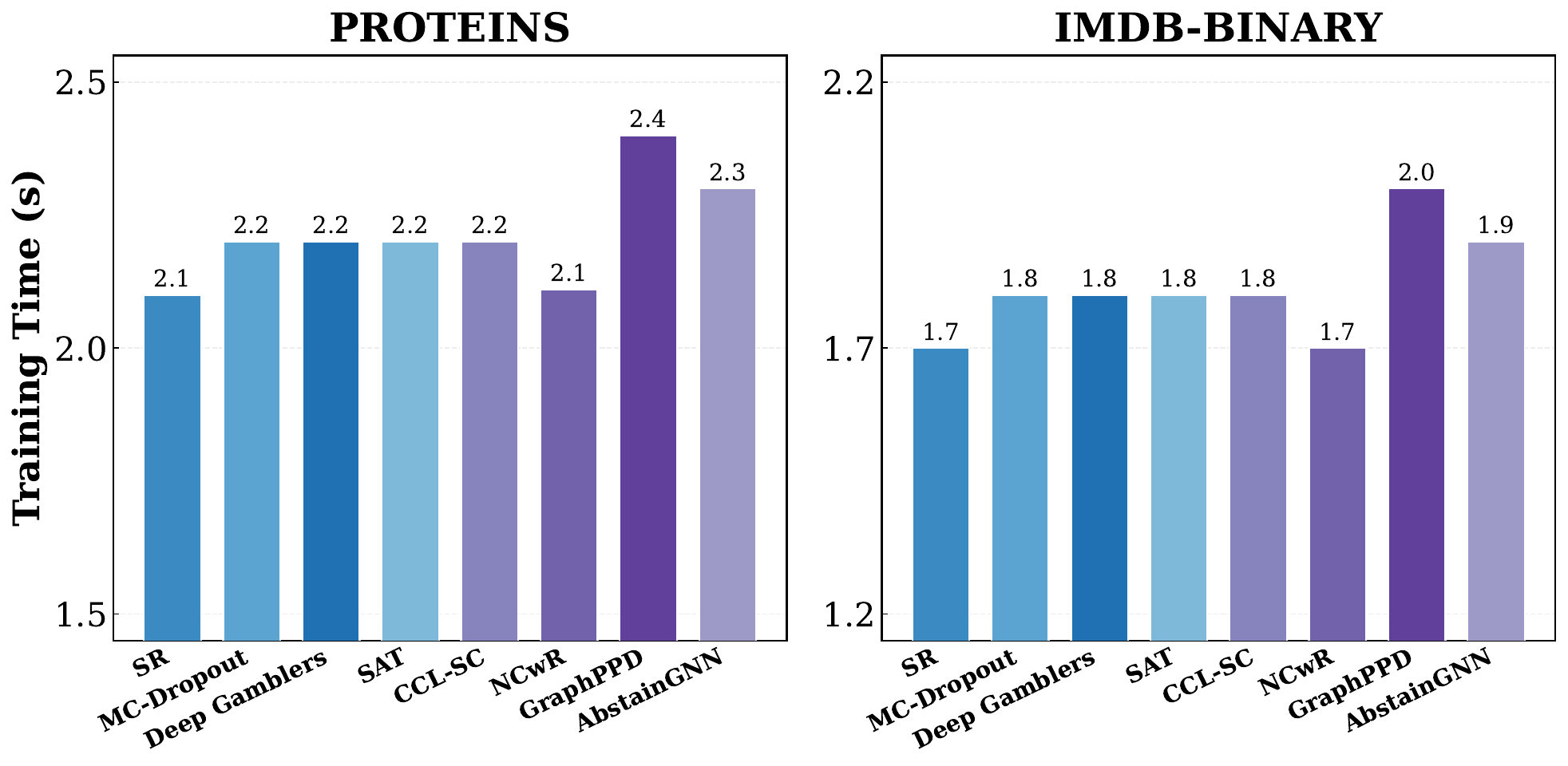}
    \caption{Per-epoch training time comparison.}
    \label{fig:AbstainGNN_training_time}
\end{figure}

\section{Conclusion}
In this paper, we propose AbstainGNN which explicitly equips GNNs with the ability to abstain from making predictions when confidence is insufficient. We provide a rigorous theoretical analysis of AbstainGNN from the perspectives of generalization bounds and training convergence. Guided by these theoretical insights, we further develop an efficient implementation of the proposed framework, including the predictive function warm-start to stabilize loss optimization and the abstention function calibration to ensure well-behaved abstention decisions. We conduct experiments on five benchmark datasets of graph classification. Extensive empirical results and analyses validate both the effectiveness of model designs and the soundness of theoretical derivations. In the future, we plan to extend the proposed abstention mechanism to a wider range of graph-related tasks and explore more suitable model architectures for AbstainGNN.


\begin{acks}
This work is supported by the National Natural Science Foundation of China (No.62402491, No.U2336202, No.62472416, No.62376064, U23A20305, and 62302345), and is also supported by Shanghai Pujiang Program (No.24PJA004). We thank the reviewers for their thoughtful feedback and constructive suggestions, which have helped us substantially improve the manuscript.
\end{acks}

\newpage

\bibliographystyle{ACM-Reference-Format}
\balance
\bibliography{sample-base}

\newpage
\appendix

\section{Appendix}



\subsection{Proof of Theorem \ref{thm:abstention_gen}}
\label{appendix:proof theorem 3.1}
We first establish an intermediate lemma that converts the abstention objective into into an empirical margin-based loss with a complexity term measured. 
\begin{lemma}
\label{lem:pacbayes_selective}
Let \(f_w:\mathcal{G}\to [0,1]^{|\mathcal{Y}|}\) be any predictive function parameterized by \(w\) in Section~\ref{sec:Graph_Classification_with_Abstention} and \(P\) be any prior distribution over the model parameters that is independent of the training sample \(\mathcal{S}\).
Fix \(\rho,\rho',\alpha,\beta,\lambda>0\) and \(\delta\in(0,1)\).
Then, with probability at least \(1-\delta\) over the draw of a training dataset \(\mathcal{S}\) consisting of \(n\) i.i.d. graph samples from the data distribution \(\mathcal{D}\), the following
inequality holds for any \(w\) and any random perturbation \(u\) simultaneously: 
\begin{equation}
\mathbb{P}_{u}\!\left[
\max_{(G,y)\in \mathcal{S}}
\bigl\|f_{w+u}(\mathbf{A}, \mathbf{X})-f_w(\mathbf{A}, \mathbf{X})\bigr\|_2
<
\min\!\left\{\frac{\rho}{4\alpha},\ \frac{\rho'}{4\beta\lambda+2\alpha}\right\}
\right]\ge \frac12.
\end{equation}
Under this condition, we have
\begin{equation}
\begin{aligned}
\mathbb{E}_{(G,y)\sim \mathcal{D}}\!\left[L(f,g;G,y)\right]
\le
\mathbb{E}_{(G,y)\sim \mathcal{S}}\!\left[L_{MH}^{\rho,\rho'}(f,g;G,y)\right]\\
+
4\sqrt{\frac{\mathrm{KL}(w+u\,\|\,P)+\ln\!\left(\frac{6n}{\delta}\right)}{n-1}}\,,
\end{aligned}    
\end{equation}
where \(\mathrm{KL}(w+u\,\|\,P)\) denotes the KL divergence.
\end{lemma}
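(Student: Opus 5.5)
This follows the PAC-Bayesian margin argument of Neyshabur et al.~\cite{neyshabur2017pac}, adapted to the abstention loss. We treat the statement as an implication: \emph{if} the perturbation condition holds with probability at least $\tfrac12$ over $u$, then the bound holds. Write $\tilde\rho=\min\{\rho/(4\alpha),\rho'/(4\beta\lambda+2\alpha)\}$ and
\[
\mathcal{S}_w=\Bigl\{u:\max_{(G,y)\in\mathcal{S}}\|f_{w+u}-f_w\|_2<\tilde\rho\Bigr\}.
\]
Define the posterior $Q$ as the law of $w+u$ conditioned on $u\in\mathcal{S}_w$. Since $\mathbb{P}_u[\mathcal{S}_w]\ge \tfrac12$, conditioning raises the KL divergence by at most $\ln 2$. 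This is where the constant $\ln(6n/\delta)$ comes from: it absorbs the $\ln 2$ into the standard $\ln(2n/\delta)$ term of the generic PAC-Bayes inequality.

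\textbf{Step 1: generic PAC-Bayes bound.} We apply McAllester's bound to a bounded surrogate loss $\tilde L$, taking values in $[0,\max(1,\lambda)]$ after clipping. It holds uniformly over posteriors $Q$, with probability at least $1-\delta$:
\[
\mathbb{E}_{\tilde w\sim Q}\mathbb{E}_{\mathcal{D}}[\tilde L]\le \mathbb{E}_{\tilde w\sim Q}\mathbb{E}_{\mathcal{S}}[\tilde L]+2\sqrt{\tfrac{\mathrm{KL}(Q\|P)+\ln(2n/\delta)}{n-1}}.
\]
The surrogate $\tilde L$ is an intermediate-margin version of the Max--Hinge loss, with $\rho,\rho'$ replaced by $\rho/2$ and $\rho'/2$. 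After adding the $\ln 2$ from conditioning and loosening constants, the complexity term becomes $4\sqrt{(\mathrm{KL}(w+u\|P)+\ln(6n/\delta))/(n-1)}$.

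\textbf{Step 2: perturbation sandwich.} The key deterministic facts are the following.
\begin{itemize}
\item On the population side, for $\tilde w\in\mathrm{supp}(Q)$ we need $L(f_w,g)\le \tilde L(f_{\tilde w},g)$ pointwise.
\item On the sample side, we need $\tilde L(f_{\tilde w},g)\le L_{MH}^{\rho,\rho'}(f_w,g)$ on $\mathcal{S}$.
\end{itemize}
The margin controls how far these losses can move. Since $\|f_{w+u}-f_w\|_2<\tilde\rho$, the margin $\gamma$ changes by at most $2\tilde\rho\le \rho/(2\alpha)$. After multiplication by $\alpha/(2\rho)$, the hinge argument shifts by at most $1/4$.

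The abstention branch is similar but couples $g$ with $f$. We must check that $\lambda(1-\beta g/\rho')$ and $\alpha g/(2\rho')$ dominate the indicators. When $g\ge\epsilon$, the classification indicator $\mathbf{1}(\gamma\le 0)$ is dominated by $1+\tfrac{\alpha}{2}(g/\rho'-\gamma/\rho)$. When $g<\epsilon$, the rejection indicator is dominated by $\lambda(1-\beta g/\rho')$, identifying $\epsilon$ with the scale $\rho'$. The denominator $4\beta\lambda+2\alpha$ is exactly what makes the combined shift in the two branches bounded by the slack $1/2$ between margin $\rho'$ and margin $\rho'/2$. Throughout we take the loss $\ell$ in $L$ to be the 0--1 loss, dominated by $\mathbf{1}(\gamma\le0)$.

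\textbf{Step 3: derandomize.} On the sample side, every $\tilde w$ in the support of $Q$ satisfies the perturbation condition on all of $\mathcal{S}$, so the sample-side inequality from Step 2 can be applied inside the expectation over $Q$. On the population side, the pointwise inequality from Step 2 is applied inside the expectation for the same reason. Combining these with Step 1 gives the stated inequality.

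\textbf{Main obstacle.} The population-side inequality in Step 2 requires the perturbation bound on test graphs, not only on $\mathcal{S}$. We would follow Neyshabur et al.~\cite{neyshabur2017pac} in arguing that the bound on $\max\|f_{w+u}-f_w\|$ comes from a norm-based perturbation estimate that is uniform over the input domain. This is the step that needs care.

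A second delicate point is verifying the constant bookkeeping in the abstention branch, namely that $4\beta\lambda+2\alpha$ is what is required. We would do this casewise on the sign of $g-\epsilon$ and on which term attains the $\max$ in $L_{MH}$.
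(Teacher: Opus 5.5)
Your outline is the paper's proof step for step: a posterior conditioned on perturbations that are small on $\mathcal S$, a $\ln 2$ conditioning penalty, a halved-margin Max--Hinge intermediate loss, and McAllester's bound. The two steps you defer are exactly where the argument breaks, and the paper's proof simply asserts both.

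\textbf{The population-side comparison.} The obstacle you name is real, and it cannot be overcome under the stated hypothesis. That hypothesis controls $\|f_{w+u}-f_w\|_2$ only on the graphs in $\mathcal S$, so nothing yields $L(f_w,g)\le\tilde L(f_{\tilde w},g)$ at test graphs. (The paper's ``fix any $(G,y)$'' silently uses the bound off the sample.) In fact the lemma as stated is false. Take the following setup:
\begin{itemize}
\item $\mathcal D$ uniform on $N\gg n$ graphs, with a lookup-table parameterization;
\item a $w^\star$ that classifies every graph correctly with margin $1$;
\item the prior $P=\delta_{w^\star}$ (or a narrow Gaussian around $w^\star$), and $g\equiv 1$.
\end{itemize}
Given $\mathcal S$, let $w$ agree with $w^\star$ on the training graphs and err elsewhere, and choose $u$ so that $w+u\sim P$. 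Then the hypothesis holds and $\mathrm{KL}(w+u\,\|\,P)=0$. For $\rho'\le\beta$ and $1/\rho\ge 2/\alpha+1/\rho'$ the empirical Max--Hinge loss vanishes. So the right-hand side is $4\sqrt{\ln(6n/\delta)/(n-1)}\to 0$, while the population error of $f_w$ is at least $1-n/N$. Requiring the perturbation bound uniformly over the input domain, as in Lemma~1 of Neyshabur et al., is therefore a necessary change of hypothesis. With it you would be proving a different lemma.

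\textbf{The abstention-branch bookkeeping.} This does not go through either, for three reasons.
\begin{itemize}
\item \emph{The threshold.} Even the unperturbed domination $L\le L_{MH}^{\rho,\rho'}$ fails with a threshold $\epsilon>0$ and $g\in[0,1]$. For $0<g<\epsilon$, the cost $\lambda\mathbf{1}(g<\epsilon)$ is not dominated by $\lambda(1-\beta g/\rho')$, and identifying $\epsilon$ with $\rho'$ does not help. Concretely, take $u=0$, $P=\delta_w$, $g\equiv\epsilon/2$, $f$ correct with margin $1$, $\rho'\le\beta\epsilon/2$ and $1/\rho\ge 2/\alpha+\epsilon/(2\rho')$. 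Then $L\equiv\lambda$ but $L_{MH}^{\rho,\rho'}\equiv 0$. The Max--Hinge loss of Cortes et al.\ presupposes a real-valued score thresholded at $0$, so you must work with $g-\epsilon$.
\item \emph{The intermediate loss.} A hinge loss with halved margins is the wrong choice, because halving $\rho,\rho'$ doubles the slopes. On an accepted, misclassified training graph (say $g=1$, $\gamma=-1$, $\lambda\le 1$) one already has $L_{MH}^{\rho/2,\rho'/2}>L_{MH}^{\rho,\rho'}$ at $u=0$, so the sample-side inequality fails. Neyshabur's halving trick needs a bounded $0$--$1$ margin loss that is monotone in its threshold.
\item \emph{Constant slack.} Even with such a $0$--$1$ intermediate loss (thresholds shifted by $2\tilde\rho$ for $\gamma$ and $\tilde\rho$ for $g$), the final comparison with $L_{MH}^{\rho,\rho'}$ holds only up to constants. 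Just above the rejection threshold, with a confident prediction and small $\rho$, $L_{MH}^{\rho,\rho'}=\lambda(1-\beta g/\rho')<\lambda$, while the shifted rejection indicator already charges $\lambda$. Your ``combined shift $\le\frac12$'' computation with $4\beta\lambda+2\alpha$ bounds this deficit. That yields a constant slack, which the stated inequality has no room for.
\end{itemize}

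\textbf{Two constants.} First, conditioning on an event of probability $Z\ge\frac12$ gives $\mathrm{KL}(\widetilde Q\,\|\,P)\le\frac{1}{Z}\bigl(\mathrm{KL}(Q\,\|\,P)+H(Z)\bigr)\le 2\,\mathrm{KL}(Q\,\|\,P)+2\ln 2$, with $H$ the binary entropy. It does not give an additive $\ln 2$: a two-point space with $P(S)<\frac12$ refutes that. The correct bound still yields $\ln(6n/\delta)$. Second, a loss with range $[0,\max(1,\lambda)]$ puts a factor $\max(1,\lambda)$ in front of the McAllester term, and the stated bound omits it.
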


\begin{proof}
Let $w' = w+u$ and $~\tilde\rho=\min\!\left\{\frac{\rho}{4\alpha},\frac{\rho'}{4\beta\lambda+2\alpha}\right\}$, we define 
\begin{equation}
S_w \;=\;\left\{w' \;:\;
\max_{(G,y)\in\mathcal S}
\bigl\|f_{w'}(\mathbf A,\mathbf X)-f_w(\mathbf A,\mathbf X)\bigr\|_2
<
\tilde\rho
\right\},
\end{equation}
By assumption, $Z \;=\;\mathbb P_u[w+u\in S_w]\;\ge\;\frac12.$
Let $\widetilde Q$ denote the conditional distribution of $w' \in S_w$, and we denote the weight sample $\widetilde w \sim \widetilde Q$ and $\widetilde w \in S_w$ almost surely. Fix any graph–label pair $(G,y)$, and let $p_w(G)=f_w(\mathbf A,\mathbf X)$.
For any $i,j\in\mathcal{Y}$, we have 
\begin{equation}
\begin{aligned}
&\bigl|\,(p_{\widetilde w}(G)[i]-p_{\widetilde w}(G)[j])-(p_w(G)[i]-p_w(G)[j])\,\bigr|\\
&\le |p_{\widetilde w}(G)[i]-p_w(G)[i]|+|p_{\widetilde w}(G)[j]-p_w(G)[j]|\\
&\le 2\|p_{\widetilde w}(G)-p_w(G)\|_2
<2\tilde\rho.
\end{aligned}
\end{equation}
Let the margin be $\gamma_w(G,y)=p_w(G)[y]-\max_{j\neq y}p_w(G)[j].$
Then the above bound implies $|\gamma_{\widetilde w}(G,y)-\gamma_w(G,y)| \;<\;2\tilde\rho \;\le\;\frac{\rho}{2\alpha}.$

 We denote the graph-level abstention score $g_w(\mathbf A, \mathbf X)=g_w(G)$ associated with the model parameters $w$, and assume it is $1$-Lipschitz with respect to $p_w(G)$ under $\|\cdot\|_2$, then 
\begin{equation}
|g_{\widetilde w}(G)-g_w(G)|\;\le\;\|p_{\widetilde w}(G)-p_w(G)\|_2 \;<\;\tilde\rho
\;\le\;\frac{\rho'}{4\beta\lambda+2\alpha}.
\end{equation}
We define the ``halved-margin'' loss $L_{MH}^{\rho/2,\rho'/2}$ by replacing
$(\rho,\rho')$ in $L_{MH}^{\rho,\rho'}$ with $(\rho/2,\rho'/2)$ as follows, 
\begin{equation}
\begin{aligned}
\mathbb E_{(G,y)\sim\mathcal D}\!\left[
L_{MH}^{\rho/2,\rho'/2}(f,g;G,y)\right]=\mathbb{P}_\mathcal D[p(G)\leq\frac{\rho}{2\alpha}]\cdot\\
\mathbb{P}_\mathcal D[g(G)>-\frac{\rho'}{4\beta\lambda+2\alpha}]+\lambda\mathbb{P}_\mathcal D[g(G)>\frac{\rho'}{4\beta\lambda+2\alpha}].
\end{aligned}
\end{equation}
Using the margin controls and the monotonicity of the hinge terms in
$L_{MH}^{\rho,\rho'}$, we obtain the following comparisons: 
\begin{equation}
\begin{aligned}
&\mathbb E_{(G,y)\sim\mathcal D}\!\left[L(f_w,g_w;G,y)\right]
\le
\mathbb E_{(G,y)\sim\mathcal D}\!\left[
L_{MH}^{\rho/2,\rho'/2}(f_{\widetilde w},g_{\widetilde w};G,y)\right],\\
&\mathbb E_{(G,y)\sim\mathcal S}\!\left[
L_{MH}^{\rho/2,\rho'/2}(f_{\widetilde w},g_{\widetilde w};G,y)\right]
\le
\mathbb E_{(G,y)\sim\mathcal S}\!\left[L_{MH}^{\rho,\rho'}(f_w,g_w;G,y)\right].
\end{aligned}
\end{equation}
Applying the PAC-Bayesian bound~\cite{neyshabur2017pac} to the
loss $L_{MH}^{\rho/2,\rho'/2}$, the prior distribution $P$, and the conditional distribution $\widetilde Q$, with the probability at least $1-\delta$ over $\mathcal S$, we have 
\begin{equation}
\begin{aligned}
\mathbb E_{(G,y)\sim\mathcal D}\!\left[
L_{MH}^{\rho/2,\rho'/2}(f_{\widetilde w},g_{\widetilde w};G,y)\right]&\le
\mathbb E_{(G,y)\sim\mathcal S}\!\left[
L_{MH}^{\rho/2,\rho'/2}(f_{\widetilde w},g_{\widetilde w};G,y)\right]\\
&+2\sqrt{\frac{2\,\mathrm{KL}(\widetilde Q\,\|\,P)+\ln\!\left(\frac{2n}{\delta}\right)}{n-1}}.
\end{aligned}
\end{equation}
Since $\widetilde Q$ is the distribution of $w'$ conditioned on $\{w'\in S_w\}$,
a standard condition inequality gives
\begin{equation}
\mathrm{KL}(\widetilde Q\,\|\,P)\;\le\;\mathrm{KL}(w+u\,\|\,P)+\ln\!\left(\frac1Z\right)
\;\le\;\mathrm{KL}(w+u\,\|\,P)+\ln 2.
\end{equation}
The second inequality holds because $Z\ge 1/2$. Absorbing constants into the logarithmic term and using $\sqrt{2(a+b)}\le \sqrt{2a}+\sqrt{2b}$
followed by a coarse simplification, we yield
\begin{equation}
2\sqrt{\frac{2\,\mathrm{KL}(\widetilde Q\,\|\,P)+\ln\!\left(\frac{2n}{\delta}\right)}{n-1}}
\;\le\;
4\sqrt{\frac{\mathrm{KL}(w+u\,\|\,P)+\ln\!\left(\frac{6n}{\delta}\right)}{n-1}}.
\end{equation}
Then, we have 
\begin{equation}
\begin{aligned}
\mathbb E_{(G,y)\sim\mathcal D}\!\left[L(f_w,g_w;G,y)\right]
&\le
\mathbb E_{(G,y)\sim\mathcal S}\!\left[L_{MH}^{\rho,\rho'}(f_w,g_w;G,y)\right]\\
&+
4\sqrt{\frac{\mathrm{KL}(w+u\,\|\,P)+\ln\!\left(\frac{6n}{\delta}\right)}{n-1}}.
\end{aligned}
\end{equation}
\end{proof}

\begin{proof}[Proof of theorem \ref{thm:abstention_gen}]
With Lemma~\ref{lem:pacbayes_selective}, we focus on bounding the KL divergence term \(\mathrm{KL}(w+u\,\|\,P)\) to derive Theorem~\ref{thm:abstention_gen}. 
Let \(C_w^{\epsilon}\in\mathbb{R}^{d\times |\mathcal{Y}|}\) be a random matrix with independent entries that have a mean of 0 and a variance of \(\sigma^2\). We can derive a perturbed predictive function \(f^{\epsilon}\) by replacing the weight parameters \(C_w\) of the classification layer \(C\) with \(C_w+C_w^{\epsilon}\). We write \(u=\max\{d,|\mathcal{Y}|\}\) for convenience. Through a standard tail bound for Gaussian random matrices~\cite{tropp2012user}, for any \(t\ge 0\), we have 
\begin{equation}
\mathbb{P}\!\left(\|C_w^{\epsilon}\|_2 \ge t\right)
\le (d+|\mathcal{Y}|)\exp\!\left(-\frac{t^2}{2\sigma^2\,u}\right)
\le 2u\exp\!\left(-\frac{t^2}{2\sigma^2 u}\right).
\end{equation}
Choosing \(t=\sigma\sqrt{2u\ln(4u)}\) yields $\mathbb{P}\!\left(\|C_w^{\epsilon}\|_2 \le \sigma\sqrt{2u\ln(4u)}\right)\ge \frac12.$ In the remainder, we work on this setting which holds with probability at least \(1/2\). Let \(\mathbf{h}_G = (R\circ U)(\mathbf{A},\mathbf{X})\in\mathbb{R}^d\) be the graph-level representation. Then, we have 
\begin{equation}
\begin{aligned}
f^{\epsilon}(\mathbf{A},\mathbf{X})-f(\mathbf{A},\mathbf{X})
&=(C_w+C_w^{\epsilon})\mathbf{h}_G-C_w\mathbf{h}_G
= C_w^{\epsilon}\mathbf{h}_G.\\
\mathbb{E}\bigl\|f^{\epsilon}(\mathbf{A},\mathbf{X})-f(\mathbf{A},\mathbf{X})\bigr\|_2^2
&=\mathbb{E}\|C_w^{\epsilon}\mathbf{h}_G\|_2^2
\le \mathbb{E}\|C_w^{\epsilon}\|_2^2 \cdot \mathbb{E}\|\mathbf{h}_G\|_2^2.
\end{aligned}
\end{equation}
Moreover, $
\mathbb{E}\|\mathbf{h}_G\|_2^2
=\mathrm{tr}\!\big(\mathbb{E}[\mathbf{h}_G\mathbf{h}_G^\top]\big)
=\mathrm{tr}\!\big(\mathrm{Cov}[\mathbf{h}_G]+\mathbb{E}[\mathbf{h}_G]\mathbb{E}[\mathbf{h}_G]^\top\big).$
Assuming the representations are centered, i.e., \(\mathbb{E}[\mathbf{h}_G]=0\), we obtain $\mathbb{E}\|\mathbf{h}_G\|_2^2=\mathrm{tr}\!\big(\mathrm{Cov}[\mathbf{h}_G]\big).$
By the law of total covariance,
\begin{equation}
\mathrm{Cov}[\mathbf{h}_G]
=\mathbb{E}\!\big[\mathrm{Cov}[\mathbf{h}_G\mid y]\big]
+\mathrm{Cov}\!\big(\mathbb{E}[\mathbf{h}_G\mid y]\big).
\end{equation}
Under the additional standing assumption that classes occur with equal probability \(1/|\mathcal{Y}|\), we have
\begin{equation}
\mathrm{tr}\!\left(\mathbb{E}\!\big[\mathrm{Cov}[\mathbf{h}_G\mid y]\big]\right)
=
\mathrm{tr}\!\left(\frac{1}{|\mathcal{Y}|}\sum_{y\in\mathcal{Y}} \mathrm{Cov}[\mathbf{h}_G\mid y]\right).
\end{equation}
Furthermore, let $\bm{\mu}_y=\mathbb{E}[\mathbf{h}_G\mid y]$ and $I_y$ is a subset of $\mathcal{S}$ that graph samples belonging to the class $y$, the above term can be written as
\begin{equation}
\begin{aligned}
 &\mathrm{tr}\!\left(\frac{1}{|\mathcal{Y}|}\sum_{y\in\mathcal{Y}} \mathrm{Cov}[\mathbf{h}_G\mid y]\right)=\frac{1}{|\mathcal{Y}|}\sum_{y\in \mathcal{Y}}\frac{1}{|I_y|}\sum_{G\in I_y}\bigl\|\mathbf{h}_G-\bm{\mu}_y\bigr\|_2^2,\\
&\mathrm{tr}\!\left(\mathrm{Cov}\!\big(\mathbb{E}[\mathbf{h}_G\mid y]\big)\right)
=
\mathrm{tr}\!\left(\frac{1}{2|\mathcal{Y}|^2}\sum_{y\neq y'}
(\bm{\mu}_y-\bm{\mu}_{y'})(\bm{\mu}_y-\bm{\mu}_{y'})^\top\right).
\end{aligned}
\end{equation}
Using the margin condition encoded by \(\tilde\rho\) and the head norm \(\|C\|_2\), we upper bound this term by
\begin{equation}
\mathrm{tr}\!\left(\mathrm{Cov}\!\big(\mathbb{E}[\mathbf{h}_G\mid y]\big)\right)
\le \frac{(|\mathcal{Y}|-1)\tilde{\rho}^2}{|\mathcal{Y}|\|C\|_2^2}.
\end{equation}
Combining these above and using the spectral-norm event \(\|C_w^{\epsilon}\|_2\le \sigma\sqrt{2u\ln(4u)}\) (it holds with probability \(\ge 1/2\)) and let $\mathrm{Var}_{\mathrm{icv}}(\mathcal{S},\mathcal{Y})=\frac{1}{|\mathcal{Y}|}\sum_{y\in \mathcal{Y}}\frac{1}{|I_y|}\sum_{G\in I_y}\bigl\|\mathbf{h}_G-\bm{\mu}_y\bigr\|_2^2$, we obtain
\begin{equation}
\mathbb{E}\bigl\|f^{\epsilon}(\mathbf{A},\mathbf{X})-f(\mathbf{A},\mathbf{X})\bigr\|_2^2
\le
\sigma^2\,2u\ln(4u)\left(
\mathrm{Var}_{\mathrm{icv}}(\mathcal{S},\mathcal{Y})
+\frac{\tilde{\rho}^2}{\|C\|_2^2}
\right).
\end{equation}

To meet the stability requirement in Lemma~\ref{lem:pacbayes_selective}, we can choose the value of \(\sigma\), enabling the right-hand side to equal \(\tilde\rho^2\): 
\begin{equation}
\sigma^2\,2u\ln(4u)\left(
\mathrm{Var}_{\mathrm{icv}}(\mathcal{S},\mathcal{Y})
+\frac{\tilde{\rho}^2}{\|C\|_2^2}
\right)=\tilde\rho^2,
\end{equation}
which yields $\sigma^2
=\frac{\tilde{\rho}^2}{2u\ln(4u)\left(\mathrm{Var}_{\mathrm{icv}}(\mathcal{S},\mathcal{Y})+\frac{\tilde{\rho}^2}{\|C\|_2^2}\right)}.$ We take a Gaussian prior \(P\) whose scale parameter matches the perturbation variance \(\sigma^2\).
With the above choice of \(\sigma\), the corresponding KL divergence term is bounded by
\begin{equation}
\mathrm{KL}(w+u\,\|\,P)
\le
\frac{u\ln(4u)\left(\mathrm{Var}_{\mathrm{icv}}(\mathcal{S},\mathcal{Y})+\frac{\tilde{\rho}^2}{\|C\|_2^2}\right)}{\tilde{\rho}^2}.
\label{KL}
\end{equation}
Substituting Eq.\eqref{KL} into Lemma~\ref{lem:pacbayes_selective}, we finish the proof.
\end{proof}

\subsection{Proof of Theorem \ref{thm:intra_contraction}}
\label{appendix:proof theorem 3.3}
\begin{proof}
Fix a class \(y\) with index set \(I_y\) and \(|I_y|\ge 1\). For each sample \(G\in I_y\), we denote the cross-entropy gradients by
\begin{equation}
\mathbf{g}_G =\nabla_{\mathbf{h}_G}\mathcal{L}_\mathrm{ce},
\qquad
\bar{\mathbf{g}}_y = \frac{1}{|I_y|}\sum_{G\in I_y} \mathbf{g}_G.
\end{equation}
We define $S_y=\sum_{G\in I_y} w_G\|\mathbf{h}_G-\bm{\mu}_y\|_2^2$, so \(\mathcal{L}_{\mathrm{icv}}=\sum_y \frac{1}{|I_y|}S_y\). By computing
\(\nabla_{\mathbf{h}_G} S_y\) for a fixed class \(y\) and the graph sample \(G\in I_y\), we obtain
\begin{equation}
\begin{aligned}
&\nabla_{\mathbf{h}_G}\mathcal{L}_{\mathrm{icv}}
=\frac{2}{|I_y|}
\Bigl[w_G(\mathbf{h}_G-\bm{\mu}_y)-s_y\Bigr],\\
&s_y =\frac{1}{|I_y|}\sum_{G'\in I_y} w_{G'}(\mathbf{h}_{G'}-\bm{\mu}_y)
=\frac{1}{|I_y|}\sum_{G'\in I_y} w_{G'} \mathbf{d}_{G'} .
\end{aligned}
\end{equation}
With the learning rate \(\eta>0\), the gradient update on \(\mathcal{L}_{\mathrm{total}}\) yields
\begin{equation}
\label{eq:update_h}
\mathbf{h}_G^{+}
= \mathbf{h}_G-\eta \nabla_{\mathbf{h}_G},\quad\mathcal{L}_{\mathrm{total}}
= \mathbf{h}_G-\eta \mathbf{g}_G-\frac{2\lambda_{r}\eta}{|I_y|}\bigl(w_G \mathbf{d}_G-s_y\bigr),
\end{equation}
Averaging Eq.\eqref{eq:update_h} over \(G\in I_y\) and using \(\sum_{G\in I_y} \mathbf{d}_G=0\), we have
\begin{equation}
\label{eq:update_mu}
\bm{\mu}_y^{+}
=\frac{1}{|I_y|}\sum_{G\in I_y} \mathbf{h}_G^{+}
= \bm{\mu}_y-\eta \bar{\mathbf{g}}_y .
\end{equation}
Subtracting Eq.\eqref{eq:update_mu} from Eq.\eqref{eq:update_h} yields the deviation recursion: 
\begin{equation}
\label{eq:update_d}
\mathbf{d}_G^{+}
=\mathbf{h}_G^{+}-\bm{\mu}_y^{+}
=\Bigl(1-\frac{2\lambda_{r}\eta}{|I_y|}w_G\Bigr)\mathbf{d}_G
-\eta\,(\mathbf{g}_G-\bar{\mathbf{g}}_y)
+\frac{2\lambda_{r}\eta}{|I_y|} s_y .
\end{equation}
We define the intra-class variance and introduce some notations:
\begin{equation}
M_y=\frac{1}{|I_y|}\sum_{G\in I_y}\|\mathbf{d}_G\|_2^2,
\quad
M_y^{+} =\frac{1}{|I_y|}\sum_{G\in I_y}\|\mathbf{d}_G^{+}\|_2^2,
\quad
M:=\sum_y M_y,
\end{equation}
\begin{equation}
a_G=1-\frac{2\lambda_{r}\eta}{|I_y|}w_G,
\quad
\mathbf{u}_G = a_G \mathbf{d}_G,
\quad
\mathbf{v}_G =-\eta(\mathbf{g}_G-\bar{\mathbf{g}}_y)+\frac{2\lambda_{r}\eta}{|I_y|} s_y,
\end{equation}
so that Eq.\eqref{eq:update_d} becomes \(\mathbf{d}_G^{+}=\mathbf{u}_G+\mathbf{v}_G\).
The inequality
\(\|\mathbf{u}+\mathbf{v}\|_2^2\le \|\mathbf{u}\|_2^2+\|\mathbf{v}\|_2^2\) gives
\begin{equation}
\label{eq:Vc_plus_basic}
M_y^{+}
\le
\,\frac{1}{|I_y|}\sum_{G\in I_y} a_G^2\|\mathbf{d}_G\|_2^2
+
\,\frac{1}{|I_y|}\sum_{G\in I_y}\|\mathbf{v}_G\|_2^2.
\end{equation}
Since \(w_G\in[\frac{1}{|\mathcal{Y}|},1]\), we have 
\begin{equation}
a_r^2
=\Bigl(1-\frac{2\lambda_{r}\eta}{|I_y|}w_G\Bigr)^2
\le
1-\frac{4\lambda_{r}\eta}{|\mathcal{Y}||I_y|}
+\frac{4\lambda_{r}^2\eta^2}{|I_y|^2}.
\end{equation}
Substituting into the first term of Eq.\eqref{eq:Vc_plus_basic} yields
\begin{equation}
\label{eq:contraction_term}
\,\frac{1}{|I_y|}\sum_{G\in I_y} a_r^2\|\mathbf{d}_G\|_2^2
\le
(1-\frac{4\lambda_{r}\eta}{|\mathcal{Y}||I_y|}
+\frac{4\lambda_{r}^2\eta^2}{|I_y|^2}\bigr)M_y.
\end{equation}
We then show how to bound \(\frac{1}{|I_y|}\sum_{G\in I_y}\|\mathbf{v}_G\|_2^2\).
First, we have
\begin{equation}
\frac{1}{|I_y|}\sum_{G\in I_y}\bigl\|\eta(\mathbf{g}_G-\bar{\mathbf{g}}_y)\bigr\|_2^2
\le 4\eta^2 \max_G||\mathbf{g}_G||^2.
\end{equation}
By Cauchy--Schwarz inequality, we have 
\begin{equation}
\|s_y\|_2
\le
\frac{1}{|I_y|}\sum_{G\in I_y} w_G\|\mathbf{d}_G\|_2
\le
 \frac{1}{|I_y|}\sum_{G\in I_y}\|\mathbf{d}_G\|_2
\le
\sqrt{M_y},
\end{equation}
and therefore $\Bigl\|\frac{2\lambda_{r}\eta}{|I_y|} s_y\Bigr\|_2^2
\le
\frac{4\lambda_{r}^2\eta^2}{|I_y|^2}\,\,M_y.$

Using \(\|a+b\|_2^2\le 2\|a\|_2^2+2\|b\|_2^2\), we obtain
\begin{equation}
\label{eq:disturbance_term}
\frac{1}{|I_y|}\sum_{G\in I_y}\|\mathbf{v}_G\|_2^2
\le
8\eta^2 \max_G||\mathbf{g}_G||^2
+
\frac{8\lambda_{r}^2\eta^2}{|I_y|^2}\,M_y.
\end{equation}
Substituting Eq.\eqref{eq:contraction_term} and Eq.\eqref{eq:disturbance_term} into Eq.\eqref{eq:Vc_plus_basic},
we conclude
\begin{equation}
M_y^{+}
\le (1-\frac{4\lambda_{r}\eta}{|\mathcal{Y}||I_y|}
+\frac{4\lambda_{r}^2\eta^2}{|I_y|^2}\bigr)M_y+8\eta^2 \max_G||\mathbf{g}_G||^2
+
\frac{8\lambda_{r}^2\eta^2}{|I_y|^2}\,M_y
\end{equation}
Summing over all classes, we obtain the global one-step recursion: 
\begin{equation}
\label{eq:global_recursion}
M^{+}\le \bigl(1-\frac{4\lambda_{r}\eta}{|\mathcal{Y}|\min_y|I_y|}
+\frac{12\lambda_{r}^2\eta^2}{\max_y|I_y|^2})M + 8\max_G||\mathbf{g}_G||^2\,\eta^2.
\end{equation}
If $\eta$ is a constant and \(0<\eta<\min\{\frac{\max_y|I_y|^2}{6k\lambda_{r}\min_y|I_y|},\frac{|\mathcal{Y}|\min_y|I_y|}{2\lambda_{r}}\}\), then \(1-\frac{4\lambda_{r}\eta}{|\mathcal{Y}|\min_y|I_y|}
+\frac{12\lambda_{r}^2\eta^2}{\max_y|I_y|^2}\le 1-\frac{2\lambda_{r}}{|\mathcal{Y}|\min_y|I_y|}\eta = \rho_{\eta}\in(0,1)\),
and iterating \eqref{eq:global_recursion} yields
\begin{equation}
M^{(t)}\le \rho_{\eta}^{t}M^{(0)}+\frac{8\eta^2}{1-\rho_{\eta}}\max_{G,t}||\mathbf{g}^{(t)}_G||^2
=\rho_{\eta}^{t}M^{(0)}+\mathcal{O}(\eta)\max_{G,t}||\mathbf{g}^{(t)}_G||^2,
\end{equation}
which proves geometric convergence to an \(\mathcal{O}(\eta)\) neighborhood.
\end{proof}
\end{document}